\renewcommand{\paragraph}{%
  \@startsection{paragraph}{4}%
  {\z@}{1ex \@plus 1ex \@minus .2ex}{-1em}%
  {\normalfont\normalsize\bfseries}%
}
  \newcommand*\textmathversion{\csname textmv@\math@version\endcsname}
  \newcommand*\textmv@normal{m}
  \newcommand*\textmv@bold{b}
\newtheorem{theorem}{Theorem}[section]
\newtheorem{proposition}[theorem]{Proposition}
\theoremstyle{definition}
\newtheorem{definition}[theorem]{Definition}
\newtheorem{assumption}{Assumption}
\newtheorem{example}[theorem]{Example}
\newcommand{\cB}{\mathcal{B}}
\newcommand{\cD}{\mathcal{D}}
\newcommand{\cL}{\mathcal{L}}
\newcommand{\cP}{\mathcal{P}}
\newcommand{\cS}{{\mathcal{S}}}
\newcommand{\RR}{{\mathbb R}}
\newcommand{\NN}{{\mathbb N}}
\newcommand{\Tr}{\operatorname{tr}}
\newcommand{\diag}{\operatorname{diag}}
\newcommand{\argmax}{\operatornamewithlimits{argmax}}
\DeclarePairedDelimiter{\dotp}{\langle}{\rangle}
\def\shortdisplay{\setlength{\abovedisplayskip}{5pt}%
	\setlength{\belowdisplayskip}{5pt}%
	\setlength{\abovedisplayshortskip}{2pt}%
	\setlength{\belowdisplayshortskip}{2pt}}
\let\oldselectfont\selectfont
\def\selectfont{\oldselectfont\shortdisplay}
\newcommand{\Vn}{{\bm V}}
\newcommand{\Vnp}{{\bm V}^+}
\newcommand{\Un}{{\bm U}}
\newcommand{\Unp}{{\bm U}^+}
\newcommand{\Gn}{{\bm G}}
\newcommand{\Gnp}{{\bm G}^+}
\newcommand{\sn}{{\bm \sigma}}
\newcommand{\snp}{{\bm \sigma}^+}
\newcommand{\bn}{{\bm b}}
\newcommand{\bnp}{{\bm b}^+}
\newcommand{\Wn}{{\bm W}}
\newcommand{\Wnp}{{\bm W}^+}
\newcommand{\fn}{{\bm f}}
\newcommand{\fnp}{{\bm f}^+}
\newcommand{\Unn}{U^{(n)}}
\newcommand{\Vnn}{V^{(n)}}
\newcommand{\Vnnp}{V^{(n+1)}}
\newcommand{\Gnn}{G^{(n)}}
\newcommand{\Gnnp}{G^{(n+1)}}
\newcommand{\Fnn}{F^{(n)}}
\newcommand{\Knn}{K^{(n)}}
\newcommand{\Knnp}{K^{(n+1)}}
\newcommand{\Wnn}{W^{(n)}}
\newcommand{\Wnnp}{W^{(n+1)}}
\newcommand{\phin}{\bm{\varphi}}
\newcommand{\psin}{\bm{\psi}}
\definecolor{block-gray}{gray}{0.93}
\newtcolorbox{myquote}{colback=block-gray,
,grow to right by=-10mm,grow to left by=-10mm,
boxrule=0pt,boxsep=0pt
}
\newcommand{\mc}[1]{\mathcal{#1}}
\newcommand{\mbb}[1]{\mathbb{#1}}
\newcommand{\mscr}[1]{\mathscr{#1}}
\newcommand{\gO}{\mathrm{O}}
\newcommand{\gSO}{\mathrm{SO}}
\begin{document}

\title{Any-dimensional equivariant neural networks}
\author{Eitan Levin\thanks{Department of Computing and Mathematical Sciences, Caltech, Pasadena, CA 91125, USA;\ \texttt{\url{https://eitanlev.in/}}} \and
   Mateo D\'\i az\thanks{Department of Applied Mathematics and Statistics and the Mathematical Institute for Data Science, Johns Hopkins University, Baltimore, MD 21218, USA;\ \texttt{\url{https://mateodd25.github.io/}}}}

\date{June 13, 2023}

\maketitle

\begin{abstract}
Traditional supervised learning aims to learn an unknown mapping 
by fitting a function 
to a set of input-output pairs with a \emph{fixed dimension}. The fitted function 
is then defined on inputs of the same dimension. However, in many settings, the unknown mapping 
takes inputs in \emph{any dimension}; examples include graph parameters defined on 
graphs of any size 
and physics quantities defined on an arbitrary number of particles. 
We leverage a newly-discovered phenomenon in algebraic topology, called representation stability, to define equivariant neural networks that can be trained with data in a fixed dimension and then extended to accept inputs 
in any
dimension.
Our approach is black-box and user-friendly, requiring only the network architecture and the groups for equivariance, and can be combined with any training procedure.
We provide a  
simple open-source
implementation of our methods
and 
offer preliminary numerical experiments. 
\end{abstract}

\section{INTRODUCTION}\label{sec:intro}

Researchers are often interested in learning mappings that are defined for inputs of different sizes.
We call such objects ``\emph{free}''
as they are dimension-agnostic. Free objects are pervasive in a range of scientific and engineering fields. For example, in physics, quantities like electromagnetic fields are defined for any number of particles. In signal processing, problems such as deconvolution are well-defined for signals of any length, while image and text classification tasks are meaningful regardless of the input size. In mathematics, norms of vectors and matrices are defined for any dimension; sorting algorithms 
handle arrays of any length, and graph parameters, such as the max-cut value, are defined for graphs of any size.

In contrast, most existing supervised learning methods aim to learn mappings by fitting a parametrized function $\widehat f \colon \RR^{d} \rightarrow \RR^{k}$ to a set of input-output pairs
$\cS = \{(X_{1}, y_{1}), \dots, (X_{n}, y_{n})\} \subseteq \RR^{d} \times \RR^{k}$ with fixed dimensions $d$ and $k$. Naturally, the function $\widehat f$ is only defined on inputs of dimension $d$, and \textit{a priori} cannot be applied to inputs in other dimensions. Practitioners use ad-hoc heuristics, such as downsampling, to resize the dimension of new inputs and match that of the learned map \citep{hashemi2019enlarging}. 
The lack of systematic methodology for applying learned functions to inputs of different sizes motivates the main question of this work:
\begin{myquote} \centering \it
  How can we learn mappings that accept inputs in any dimension?
\end{myquote}
We answer this question for mappings that are invariant or equivariant under the action of groups.
Such mappings are ubiquitous since many application domains exhibit symmetry,
e.g., physical quantities are equivariant under translations and rotations because they only depend on the relative position of particles~\citep{villar2021scalars}, while graph parameters are equivariant under permutations because they only depend on the underlying 
topology
rather than on its labelling~\citep{chandrasekaran2012convex}.
We propose to learn such mappings using free equivariant neural networks.
Formally, we consider a sequence of neural networks and a sequence of groups, such that a network at any level in the sequence is equivariant with respect to the group at the same level; see Figure~\ref{fig:freeNNs}. 
We train an equivariant network with data $\cS$ in a fixed dimension and leverage symmetry to extend it 
to other dimensions.

\paragraph{Core contributions.} 
The question posed above presents three key challenges.
First, how do we encode infinite sequences of neural networks, one in each dimension, using only finitely-many parameters? Doing so is necessary if we want to learn from a finite amount of data.
Second, how do we ensure that the networks we learn from data in a fixed dimension generalize well to higher dimensions? Third, is there a user-friendly procedure for learning these networks?
We proceed to tackle these challenges.
%

\textit{Free equivariant neural networks.} Equivariance is the key ingredient in tackling the first challenge. 
The authors of~\citep{maron2018invariant} showed that the dimension of permutation-equivariant linear layers between tensors is independent of the size of the tensors, and gave free bases for such linear layers. For example, the space of permutation-equivariant linear layers for graph adjacency matrices is 15-dimensional for any $n$, and its basis contains the identity, taking a transpose and extracting the diagonal, among others.
The authors of that paper use this fact to derive finite parameterizations of linear layers
\begin{equation}
  \label{eq:1}
  W = \alpha_{1} A_{1}^{(n)} + \dots + \alpha_{15} A_{15}^{(n)}
\end{equation}
 where $\left\{A_{1}^{(n)}, \dots, A_{15}^{(n)}\right\}$ is a basis of free equivariant maps for graphs on $n$ nodes. This parameterization allows them to instantiate linear layers in any dimension using only the fifteen parameters $\alpha_i$, which can be learned from data in a fixed dimension. 
Our \textbf{first contribution} is showing that this is not a coincidence but rather stems from a general, recently-identified phenomenon known as \emph{representation stability}. We utilize this phenomenon to show that the dimension of equivariant linear layers stabilizes for a number of group actions, including those induced by (signed) permutations, the orthogonal groups $\gO(n)$, rotations $\gSO(n)$, and the Lorentz groups $\gO(1,n)$.
We leverage this observation to derive finite parametrizations of infinite sequences of equivariant neural networks.


\textit{Generalization across dimensions.} The authors of~\citep{maron2018invariant} observed that neural networks obtained using~\eqref{eq:1} do not always generalize well to dimensions different from the one used for training. Intuitively, this is due to overparametrization, as there are many ways to represent the same function in a fixed dimension, but not all of them extend correctly to other dimensions. 
Our \textbf{second contribution} is to introduce a compatibility condition relating the mappings in different dimensions, which has a regularization effect that often leads to better generalization.
Formally, it entails to the commutativity of the diagrams in Figure~\ref{fig:freeNNs}. 
We further explain how to impose this condition on the network architecture.

\textit{Computational recipe.} The authors of~\citep{maron2018invariant} \emph{manually} found the free basis elements in~\eqref{eq:1}.
However, for more complicated groups and spaces of linear layers, manually finding a free basis becomes prohibitive.
In a fixed dimension, one can circumvent this issue using an algorithm proposed in~\citep{pmlr-v139-finzi21a} to compute a basis.
For our \textbf{third contribution}, we 
extend this basis to any other dimension by solving a sparse linear system, enabling us to obtain free bases computationally. 

We provide proof-of-concept experiments demonstrating the ability of our approach to learn functions defined for any dimension. We emphasize, however, that the primary contribution of this paper is its conceptual novelty. To our knowledge, our method is the first \emph{black-box approach} for training neural networks that can handle inputs in any dimension.
Instead of customizing architectures for specific domains and data types, as has been done in the prior work discussed below, our strategy adapts itself to the application domain by selecting appropriate group actions and nested sequences of vector spaces. 
The downside of the more general approach is that we do not expect it to outperform application-specific methods. 
Thus, we envision our framework serving as an initial step when approaching novel and poorly understad problems, or serving as a baseline to evaluate the performance of application-specific architectures.

\paragraph{Notation.} \label{par:notation} Given two finite-dimensional vector spaces $U$ and $V$ we use $\cL(V, U)$ to denote the set of linear mappings between them. For a group $G$ acting on both $U$ and $V$, we denote invariant elements as $U^G= \{u \in U \mid g\cdot u = u\}$, and equivariant linear maps as $\cL(V, U)^G = \left\{ A \in \cL(V, U) \mid g A  g^{-1} = A\right\}.$ The map $\mc P_W\colon U\to U$ denotes the projection onto a subspace $W \subseteq U$. The symbols $\mathbbm{1}_n$ and $I_n$ represent the all-ones vector and the identity in dimension $n$, respectively. The symbol $\RR[G]$ denotes the set of finite linear combinations of elements in $G$. To avoid cluttering notation, we use bold font symbols whenever possible to denote the $n$th element in a sequence, e.g., the symbol $\Gn$ denotes $G^{(n)}$. We add a ``$+$'' superscript to denote the $(n+1)$th element, e.g., $G^{(n+1)}$ is denoted by $\Gn^+$. 

\paragraph{Outline.} The remainder of this section focuses on related work. Section~\ref{sec:free-descriptions} formally defines free neural networks. Section~\ref{sec:compatibility} introduces a compatibility condition that ensures good generalization across different dimensions. In Section~\ref{sec:computational}, we describe our computational recipe to learn free neural networks from data in a fixed dimension and extend to other dimensions. Section~\ref{sec:experiments} provides numerical experiments. We close the paper in Section~\ref{sec:conclusions} with conclusions, limitations, and future work. We defer several technical details to the appendix.

\subsection{Related work}
\paragraph{Equivariant learning.}
The benefits of equivariant architectures first became apparent with the success of convolutional neural networks (CNNs) in computer vision~\citep{cnns_imagenet}. 
Since then, equivariance has been applied to a range of applications. Examples include DeepSets~\citep{deepsets} and graph neural networks~\citep{GNNs,maron2018invariant} using permutation equivariance to process sets and graphs; AlphaFold 2~\citep{alphafold} and ARES~\citep{ARES} using $\mathrm{SE}(3)$-equivariance for protein and RNA structure prediction and assessment; steerable~\citep{cohen2017steerable} and spherical~\citep{sphericalCNNs} CNNs using rotation-equivariance to classify images; and physics-informed neural networks are equivariant under the symmetries of the corresponding physical systems~\citep{karniadakis2021physics}. 
Many of these architectures have been shown to implement a generalized notion of convolution over the groups at hand~\citep{pmlr-v80-kondor18a, pmlr-v48-cohenc16}.
Under additional assumptions, equivariant architectures have been derived using invariant theory to explicitly parametrize polynomial equivariant maps~\citep{villar2021scalars}.
We refer the reader to~\citep{whatIs,bronstein2021geometric} for an introduction to equivariant deep learning.

\paragraph{Dimension-free learning.}
Certain architectures processing data of a particular type are defined for inputs of different sizes.
Many architectures processing graph-based data update the features at each vertex by applying the same function of the features of the vertex's neighbors, hence can be applied to graphs of any size~\citep{pmlr-v70-gilmer17a,SGNNs,GNNs}.
Convolutional neural networks (CNNs) processing signals and images convolve their inputs with filters of constant size, hence can also be applied to inputs of different sizes. Nevertheless, it has been observed that na\"ively applying CNNs to inputs of size different from that used during training leads to artifacts, and several downsampling techniques have been proposed to resize inputs to a CNN~\citep{hashemi2019enlarging}. 
Networks processing natural language embed words as vectors of the same length but are defined for arbitrarily long sequences of such vectors. 
Recurrent neural networks process such sequences one-by-one, using cycles in their architectures to sequentially combine a new input vector with a function of the previous inputs~\citep{rnns}. 
Recursive neural networks apply the same weights recursively to pairs of input vectors to combine them into one, until the entire sequence is reduced to one vector~\citep{socher2011parsing}.
Notably, all these architectures must process the input in particular ways motivated by the specific application at hand, whereas we only need to assume that the network processes its input equivariantly.

\paragraph{Representation stability.}
Representation stability considers nested sequences of groups and their representation, and implies that such sequences of representations often stabilize. Specifically, there is a labelling of the irreducibles of these groups such that the decomposition of the representations in the sequence contain the same irreducibles with the same multiplicities. This phenomenon has been formalized in~\citep{CHURCH2013250} and further studied in~\citep{FImods,WILSON2014269,GADISH2017450,sam_snowden_2015,sam2016gl,sam2017grobner}. 
It has been applied to study free convex sets and algebraic varieties~\citep{levin2023free,van2021theorems,Draisma2014,chiu2022sym,alexandr2023moment}, though to our knowledge, this paper is the first to apply it to equivariant deep learning.
We refer the reader to~\citep{farb2014representation,wilson2018introduction,sam2020structures} for introductions to this area.


\section{FREE NEURAL NETWORKS}\label{sec:free-descriptions}
\begin{figure}
    \centering
    \includegraphics[width=.8\linewidth]{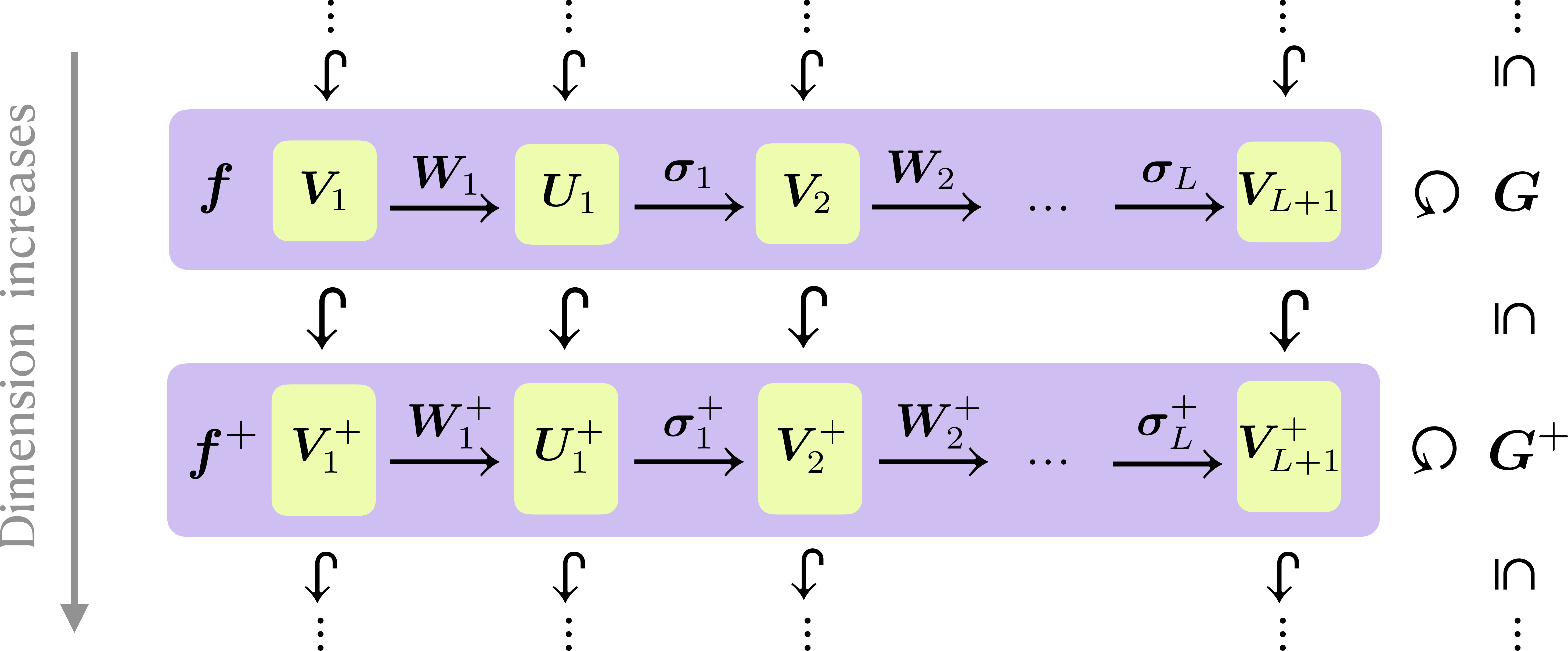}
    \caption{Equivariant free neural networks. We use bold font to denote the $n$th element in a sequence; see Notation in Section~\ref{par:notation} for details. 
    }
    \label{fig:freeNNs}
\end{figure}
In this section, we introduce the concept of free neural networks, i.e., networks that can be instantiated in every dimension. 
To set the stage, let us recall the classical notion of a neural network (NN). A NN is a mapping $f=f_L\circ\ldots\circ f_1$ where $f_i\colon V_i\to V_{i+1}$ is a composition $f_i(x) = \sigma_i(W_ix+b_i)$ of an affine map $x\mapsto W_ix+b_i$ and an activation map $\sigma_i\colon U_i\to V_{i+1}$.
This yields a family of mappings parametrized by the weights $W_i\in\mc L(V_i,U_i)$ and biases $b_i\in U_i$. In turn, these parameters are chosen to minimize a prescribed loss function.

In several settings, we want $f$ to respect symmetries in its inputs.
Formally, we require $f$ to be equivariant with respect to the action of a group $G$, i.e., $f\circ g=g\circ f$ for all $g\in G$.
A natural way to obtain equivariance is by ensuring that each building block of $f$ is equivariant, i.e., $W_i \in \cL(V_i, U_i)^G$, $b_i \in U_i^G$, and $\sigma_i$ is $G$-equivariant. 
NNs satisfying these properties are called \emph{equivariant}. Equivariant NNs can be trained by computing a basis for the space of weights and biases and optimizing over the coefficients in this basis~\citep{pmlr-v139-finzi21a}.

In this work, we seek equivariant NNs that extend to inputs in \emph{any dimension}. To this end, we consider sequences $\{\Un_i\}$ and $\{\Vn_i\}$ of nested\footnote{Formally, we have embeddings $\Vn \hookrightarrow \Vnp$ and $\Un \hookrightarrow \Unp$.} vector spaces $\Un_i \subseteq \Unp_i$ and $\Vn_i \subseteq \Vnp_i$, with actions of a nested sequence of groups $\{\Gn\}$, see Figure~\ref{fig:freeNNs}. For example, we have inclusions $\RR^n\subseteq \RR^{n+1}$ by zero-padding on which the group of permutations on $n$ letters $\Gn=\mathrm{S}_n$ acts by permuting coordinates.
These define a sequence of NNs whose weights $\Wn_i\colon \Vn_i \rightarrow \Un_i$ and biases $\bn_i\in \Un_i$ increase in size. The activation functions $\sn$ are often defined for every dimension, as the next section shows. Thus, we focus here on extending the linear layers.


\textbf{Free equivariant networks.} Because the dimensions of the space of linear layers $\cL(\Un_i,\Vn_i)$ and the vector spaces $\Un_i$ usually grow, sequences of \emph{general} NNs are not finitely-parameterizable and cannot be learned from a finite amount of data.  The situation radically simplifies when considering equivariant networks since the dimensions of $\cL(\Un_i,\Vn_i)^{\Gn}$ and $\Un_i^{\Gn}$ often stabilize. This was previously observed in the context of simple graph NNs by~\citep{maron2018invariant}, and follows from a general phenomenon known as representation stability~\citep{CHURCH2013250}.
To illustrate it with a concrete example, consider the nested sequence $\Un=\Vn=\RR^n$ with the action of $\Gn=\mathrm{S}_n$ as above. Then $\Vn^{\Gn}$ is one-dimensional and spanned by $\mathbbm{1}_n$ while $\cL(\Un,\Vn)^{\Gn}$ is two-dimensional and spanned by $I_n, \mathbbm{1}_n\mathbbm{1}_n^\top$, which are basis elements defined for every $n$. 
Similarly, \citep{maron2018invariant} obtained explicit free bases for spaces of invariant tensors. 

Interestingly, all the above basis elements project onto each other, e.g., the orthgonal projection of $\mathbbm{1}_n$ onto $\RR^{n-1}$ is $\mathbbm{1}_{n-1}$, and similarly $I_n, \mathbbm{1}_n\mathbbm{1}_n^\top$ project onto $I_{n-1}, \mathbbm{1}_{n-1}\mathbbm{1}_{n-1}^\top$. Motivated by this observation, we say that a sequence of equivariant NNs is \emph{free} if the weights $\Wn_i\colon \Vn_i\to \Un_i$ and biases $\bn_i\in \Un_i$ satisfy
\begin{equation}\tag{\texttt{FreeNN}}\label{eq:freeNN}
    \Wn_i = \mc P_{\Un_i}\left.\Wnp_i \right|_{\Vn_i},\qquad \text{and} \qquad \bn_i = \mc P_{\Un_i}\bnp_i.
\end{equation}
For many sequences $\{\Vn_i\}, \{\Un_i\}$, equation~\eqref{eq:freeNN} uniquely determines $\Wnp_i, \bnp_i$ from $\Wn_i,\bn_i$, allowing us to uniquely extend a network to accept larger inputs.

\textbf{Representation stability.} To understand for which sequences~\eqref{eq:freeNN} allows us to extend, we introduce some key definitions from the representation stability literature~\citep{CHURCH2013250,FImods}.
\begin{definition}[Consistent sequences]\label{def:consistent_seqs}
Fix a sequence of compact groups $\mscr G=\{\Gn\}_{n\in\NN}$ such that $\Gn \subseteq \Gnp $.
The family $\mscr V = \{(\Vn,\phin)\}_{n\in\NN}$ is a \emph{consistent sequence} of $\mscr G$-representations if the following hold true for all $n$:
\begin{enumerate}[(a)]
    \item (\textbf{Representations}) The set $\Vn$ is an orthogonal $\Gn$-representation;
    \item (\textbf{Equivariant isometries}) The map $\phin \colon \Vn \hookrightarrow \Vnp$ is a $\Gn$-equivariant isometry.
\end{enumerate}
\end{definition}
\noindent We will identify $\Vn$ with its image $\phin(\Vn) \subseteq \Vnp$, and omit the inclusions $\phin$ unless needed. 
Importantly, consistent sequences can be combined to form more complex sequences. In particular, if $\mscr V = \{(\Vn,\phin)\}$ and $\mscr U = \{(\Un,\psin)\}$ are consistent sequences of $\{\Gn\}$-representations, then so are their sum and tensor product:
    \begin{equation}
        \mscr V\oplus \mscr U = \{(\Vn\oplus \Un, \phin\oplus\psin)\},\quad
        \mscr V\otimes\mscr U = \{(\Vn\otimes \Un, \phin\otimes\psin)\}. 
    \end{equation}
This follows directly from Definition~\ref{def:consistent_seqs}.
 We use $\mscr V^{\oplus k}$ (resp. $\mscr V^{\otimes k}$) to denote the direct sum (resp., tensor product) of $\mscr V$ taken $k$ times. As a direct by-product of this observation, we obtain that the sequence of linear maps $\cL(\mscr V,\mscr U) = \{(\mc L(\Vn,\Un), \phin\otimes\psin)\} $ is consistent since it is isomorphic to $\mscr V\otimes\mscr U$. 
The following parameter controls the complexity of consistent sequences and ensures that their spaces of invariants are eventually isomorphic.
\begin{definition}[Generation degree]\label{def:gen_deg}
    A consistent sequence $\{\Vn\}$ 
    is \emph{generated in degree $d$} if $\RR[\Gn]V^{(d)}=\Vn$ for all $n\geq d$. The generation degree is the smallest such $d \in  \NN$. The sequence is \emph{finitely-generated} 
    if the generation degree is finite.
\end{definition}
\noindent In words, a consistent sequence $\{\Vn\}$ is generated in degree $d$ if for all $n \geq d$, $\Vn$ is equal to the span of linear combinations of elements of $\Gn$ applied to $V^{(d)}$. For instance, if $\Vn=\RR^d$ with the action of permutations $\Gn=\mathrm{S}_n$, then its generation degree is one, as any $v \in \Vn$ can be obtained from $e_1 = (1, 0,\dots, 0) \in V^{(1)} \subseteq \Vn$ via $v = \sum v_i \, g_i \cdot e_1$ where $g_i$ swaps the the first and $i$th components. 
The next proposition gives an isomorphism between spaces of invariants in a finitely-generated consistent sequence.
This result was first established in the representation stability literature, albeit in a different form. We include proof for completion.
\begin{proposition}[Isomorphism of invariants]\label{prop:surj_imp_inj}
If $\{\Vn\}$ is generated in degree $d$, then the orthogonal projections $\mc P_{\Vn}\colon (\Vnp)^{\Gnp}\to \Vn^{\Gn}$ are injective for all $n\geq d$, and isomorphisms for all large $n$.
\end{proposition}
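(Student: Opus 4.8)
The plan is to establish three things in turn: that the stated map is well-defined (its image lies in $\Vn^{\Gn}$), that it is injective for all $n\geq d$, and that injectivity forces the invariant dimensions to stabilize, so that beyond some point the injective maps are automatically isomorphisms. Well-definedness is quick: since $\phin$ is $\Gn$-equivariant, $\Vn$ is a $\Gn$-invariant subspace of $\Vnp$, and because the representation is orthogonal the orthogonal projection $\mc P_{\Vn}$ commutes with the $\Gn$-action. Hence for $v\in(\Vnp)^{\Gnp}$ and any $g\in\Gn\subseteq\Gnp$ we get $g\,\mc P_{\Vn}v=\mc P_{\Vn}(gv)=\mc P_{\Vn}v$, so indeed $\mc P_{\Vn}v\in\Vn^{\Gn}$.

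The core step is injectivity, and here generation in degree $d$ does the work. Suppose $v\in(\Vnp)^{\Gnp}$ satisfies $\mc P_{\Vn}v=0$, i.e.\ $v\perp\Vn$. For $n\geq d$ we have $V^{(d)}\subseteq\Vn$, so in particular $v\perp V^{(d)}$. The idea is to upgrade this to $v\perp\Vnp$ using invariance: for every $g\in\Gnp$ and $w\in V^{(d)}$, equivariance of the inner product gives $\langle v,gw\rangle=\langle g^{-1}v,w\rangle=\langle v,w\rangle=0$, where the middle equality uses $\Gnp$-invariance of $v$ and the last uses $v\perp V^{(d)}$. Since $\{\Vn\}$ is generated in degree $d$ and $n+1\geq d$, the vectors $gw$ span $\Vnp=\RR[\Gnp]V^{(d)}$, so $v\perp\Vnp$ and therefore $v=0$.

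Finally I would convert injectivity into stabilization by a dimension count. Write $a_n=\dim\Vn^{\Gn}$, which is finite since each $\Vn$ is finite-dimensional. Injectivity of $\mc P_{\Vn}\colon(\Vnp)^{\Gnp}\to\Vn^{\Gn}$ gives $a_{n+1}\leq a_n$ for all $n\geq d$, so $(a_n)_{n\geq d}$ is a non-increasing sequence of non-negative integers and hence eventually constant. For every $n$ beyond the stabilization point, $\mc P_{\Vn}$ is an injective linear map between spaces of the same finite dimension, hence an isomorphism.

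The conceptual heart---and the only genuinely non-formal step---is the injectivity argument, namely the passage from $v\perp V^{(d)}$ to $v\perp\Vnp$; everything else (well-definedness and the monotone-integer stabilization) is routine. I expect the only points requiring care to be verifying that orthogonal projection onto a $\Gn$-invariant subspace of an orthogonal representation is $\Gn$-equivariant, and confirming the index bookkeeping: $n\geq d$ is exactly what guarantees both $V^{(d)}\subseteq\Vn$ and $\Vnp=\RR[\Gnp]V^{(d)}$ simultaneously.
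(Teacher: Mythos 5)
Your proof is correct and takes essentially the same route as the paper's: equivariance of $\mc P_{\Vn}$ from orthogonality of the action, injectivity by pairing a $\Gnp$-invariant vector orthogonal to $\Vn$ against the $\Gnp$-span of $V^{(d)}$, and then stabilization of the non-increasing sequence of invariant dimensions. The only cosmetic difference is that you verify $v\perp gw$ generator-by-generator while the paper expands an arbitrary $u\in\Vnp$ as $\sum_i g_i u_i$ with $u_i\in\Vn$; the underlying computation is identical.
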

\begin{proof}
First, the projection $\mc P_{\Vn}$ is $\Gn$-equivariant because $\Gn$ acts orthogonally. Second, it maps $\Gnp$-invariants in $\Vnp$ to $\Gn$-invariants in $\Vn$ because $\Gn\subseteq \Gnp$. 
Third, we prove injectivity for $n\geq d$. Suppose $\mc P_{\Vn}v=0$ for some $v\in (\Vnp)^{\Gnp}$. For any $u\in \Vnp$, write $u = \sum_ig_i u_i$ where $u_i\in \Vn$ and $g_i\in \Gnp$. We then have $\langle v, u\rangle = \langle v, \sum_iu_i\rangle = 0$ because $v$ is invariant and $\sum_iu_i\in \Vn$. Since $u\in \Vnp$ was arbitrary, we conclude that $v=0$. The injectivity of $\mc P_{\Vn}$ shows that $\dim (\Vn)^{\Gn}\geq \dim (\Vnp)^{\Gnp}$ for all $n\geq d$, hence the sequence of dimensions $\dim (\Vn)^{\Gn}$ eventually stabilizes, at which point the projections become isomorphisms.
\end{proof} 
Note that~\eqref{eq:freeNN} precisely requires $\Wnp$ and $\bnp$ to project to $\Wn,\bn$ inside the consistent sequences $\mc L(\mscr V_i,\mscr U_i),\mscr U_i$, respectively.
Thus, Proposition~\ref{prop:surj_imp_inj} implies that for all large $n$, we can parameterize infinite sequences of equivariant linear layers with a finite set of parameters $\alpha_1, \dots, \alpha_\ell$, just as in \eqref{eq:1}. This result enables us to train free NNs as we describe in Section~\ref{sec:computational}. 


\textbf{Examples.} \label{par:examples} We collect a few crucial examples and include additional ones in the appendix.
\begin{itemize}[leftmargin=2mm]
 \item[] \textit{Scalar sequence.} The sequence $\mscr S=\{\RR\}$ together with $\phin (x) = x$ yields a consistent sequence for any sequence of groups acting trivially, i.e., $g \cdot x = x$. It is generated in degree one. By setting the last layer $\Vn_{L+1} = \RR$ in the free NN architecture of Figure~\ref{fig:freeNNs}, we obtain a sequence of invariant functions $\fn$.

 \item[] \textit{Permutation sequences.} Let $\mscr R = \{\RR^n\}$ be the consistent sequence with zero-padding embeddings $\phin(x)=(x^\top,0)^\top$ and the action of the symmetric group $\Gn=\mathrm{S}_n$ by permuting coordinates. Consider $\mscr V_k = \mscr U_{k-1} = \mscr R^{\otimes k}$, which is generated in degree $k$. 
The dimensions of the space of weights and biases for low-order tensors are eventually
\begin{equation*}\begin{aligned}
    &\dim\mc L(\Vn_1,\Un_1)^{\Gn}=4,\qquad &&\dim \Un_1^{\Gn} = 2,\\ &\dim \mc L(\Vn_2,\Un_2)^{\Gn}=52,\qquad &&\dim \Un_2^{\Gn}=
    5.
\end{aligned}\end{equation*}

 \item[] \textit{Graph(on) sequences.} There are two ways to identify small graphs with larger ones, namely, appending isolated vertices and identifying graphs with their associated graphons. These two embeddings yield two consistent sequences. 
 For the first embedding, let $\Vn = \mbb S^n$ with the Frobenius inner product and embeddings $\phin(X) = \mathrm{blkdiag}(X,0)$ padding $X$ by a zero row and zero column. We endow $\Vn$ with the action of $\Gn=\mathrm{S}_n$ by simultaneously permuting rows and columns (so $g\cdot X=gXg^\top$ for all $g\in\Gn$ and $X\in\Vn$). This sequence is generated in degree 2.
 For the second embedding, let $\Vn = \mbb S^{2^n}$ with the normalized inner product $\langle X,Y\rangle=2^{-2n}\mathrm{Tr}(XY)$, embeddings $\phin(X)=X\otimes\mathbbm{1}_{2\times 2}$, and the same action of the symmetric group $\Gn=\mathrm{S}_{2^n}$ as above. 
 This sequence arises in the theory of graphons~\cite{lovasz2012large}, where adjanceny matrices $X$ are associated with a step function (the graphon) $W_X\colon [0,1]^2\to\RR$ in such a way that $X$ and $X\otimes\mathbbm{1}_{2\times 2}$ represent the same graphon and $\langle X,Y\rangle = \int_{[0,1]^2}W_X(x,y)W_Y(x,y)\, dx\, dy$. The graphon consistent sequence is generated in degree 2 by~\cite[Prop.~3.1]{levin2023free}.
\end{itemize}

\textbf{When does stability kick in?} Understanding the exact level at which the projections become isomorphisms is important, yet, an exact characterization is rather technical.
Stabilization occurs at the so-called \textbf{presentation degree} of the sequence, which might be larger than the generation degree in general. However, they agree for many relevant examples, such as in the sequence $\mscr R^{\otimes k}$ above.
We include a formal definition and a discussion in Appendix~\ref{sec:presentation-degree}.

\section{GENERALIZATION ACROSS DIMENSIONS AND COMPATIBILITY}\label{sec:compatibility}

Free NNs often do not generalize correctly to other dimensions. In Section \ref{sec:experiments}, we provide examples where a free NN yields excellent test error in its trained dimension while exhibiting errors $10^{10}$ times worst in other dimensions. This discrepancy arises due to overparametrization, as there are numerous ways to encode the same function in a fixed dimension, and yet not every encoding extends seamlessly. In this section, we propose a regularization strategy 
that often leads to better generalization across dimensions.

Specifically, inspired by a similar condition for free convex sets 
\citep{levin2023free}, we introduce the following compatibility condition.

\begin{definition}[Compatible networks]
A sequence of maps $\{\fn\colon \Vn\to \Un\}$ is (intersection-) \emph{compatible} if $\fnp|_{\Vn}=\fn$. A sequence of equivariant networks
(Figure~\ref{fig:freeNNs})
is compatible if 
\begin{equation}\tag{\texttt{CompNN}}\label{eq:compatibleNN}
    \Wnp_i|_{\Vn_i} = \Wn_i,\quad \bnp_i = \bn_i,\quad \text{and} \quad \snp_i|_{\Un_i} = \sn_i.
\end{equation}
\end{definition}
\noindent Intuitively, this condition ensures that applying a function in a fixed dimension and then extending to a higher dimension is equivalent to first extending and then applying the function. Layer-wise compatibility \eqref{eq:compatibleNN} guarantees that the sequence of NN $\{\fn\colon \Vn_1\to \Vn_{L+1}\}$, given by Figure~\ref{fig:freeNNs}, is compatible, or equivalently, that the diagram in that figure commutes.
The condition \eqref{eq:compatibleNN} is strictly stronger than \eqref{eq:freeNN}, as compatible NNs are free, but free NNs are not compatible in general. 

\paragraph{Compatibility in the wild.} \label{ex:comp} Compatibility is a natural condition satisfied by sequences of equivariant maps $\{\fn\}$ arising in many problems of interest.
Here we list some examples. 
\begin{enumerate}[leftmargin=2mm]
\item[] \textit{Graph(on) parameters.} Graph parameters are functions of graphs (typically of any size) that do not depend on the labeling of their vertices. In other words, these are sequences of $\mathrm{S}_n$-invariant functions \{$\fn\colon\mbb S^n\to \RR\}$. Several graph invariants are also compatible with zero-padding, which is equivalent to adding an isolated vertex. For example, the max-cut value, the number of triangles, and cycles do not change if we add an isolated vertex, hence they are compatible. 
Other graph invariants only depend on a graph via its associated graphon, and hence are compatible with the embeddings in the graphon sequence (see example at the end of Section~\ref{sec:free-descriptions}). These include graph homomorphism densities, which play a central role in extremal combinatorics~\cite{lovasz2012large}.


\item[] \textit{Matrix mappings.} Linear algebra operations are often compatible. For instance, the experiments in ~\citep{maron2018invariant} aimed to learn the following matrix mappings: 
$X\mapsto \frac{1}{2}(X+X^\top)$, 
$X\mapsto \diag(\diag(X))$, 
$X \mapsto \Tr(X)$, and 
$X \mapsto \argmax_{\|v\|=1}\|Xv\|$. All of these are compatible sequences with the zero-padding embedding described in Section \ref{par:examples}, and $\mathrm{S}_n$-equivariant.

    \item[] \textit{Orthogonal invariance.} The papers \citep{pmlr-v139-finzi21a,villar2021scalars} consider the task of learning the function
    \begin{equation}\label{eq:sin}
        \fn(x_1,x_2) = \sin(\|x_1\|) - \frac{\|x_2\|^3}{2} + \frac{x_1^\top x_2}{\|x_1\|\|x_2\|},
\end{equation}
    defined on $(\RR^n)^{\oplus 2}$ with $n=5$ from evaluation data. This function is $\mathrm{O}(n)$-invariant if we let rotations $g$ act by $(g \cdot x_1, g \cdot x_2)$. Embedding $(\RR^n)^{\oplus 2}$ into $(\RR^{n+1})^{\oplus 2}$ by zero-padding each vector, we see that $\{\fn\}$ is a compatible sequence of invariant functions.

    \item[] \textit{Orthogonal equivariance.} The $\mathrm{O}(3)$-equivariant task in~\citep{pmlr-v139-finzi21a,villar2021scalars} consists of taking as input $n=5$ particles in space, given by their masses and position vectors $(m_i,x_i)_{i=1}^n\in (\RR\oplus\RR^3)^{\oplus n}$, and outputting their moment of inertia matrix 
    $
        \fn(m_i,x_i)= \sum_{i=1}^nm_i(x_i^\top x_iI_3 - x_ix_i^\top).
    $
    Embedding $\Vn=(\RR\oplus\RR^3)^{\oplus n}$ into $\Vnp$ by zero-padding and letting $\Un=\mbb S^3$, we see that $\{\fn\}$ is a compatible sequence of maps. Furthermore, let $\Gn=\mathrm{S}_n\times \mathrm{O}(3)$ act on $\Vn$ by $(\pi,g)\cdot (m_i,x_i)_{i=1}^n = (m_{\pi^{-1}(i)},gx_{\pi^{-1}(i)})_{i=1}^n$, and on $\Un$ by $(\pi,g)\cdot X=gXg^{-1}$ for $(\pi,g)\in \Gn$ (so permutations act trivially). Then each $\fn$ is $\Gn$-equivariant.

\end{enumerate}
There are many more examples of compatible mappings in the literature \citep{levin2023free}. Therefore, we proceed to study compatible NNs. We derive conditions ensuring the compatibility of the linear layers and show that several standard activation functions are compatible.

\paragraph{Compatible linear layers.}
Since we only have data in some finite level $n_0$, we ask when do fixed-dimensional weights $W^{(n_0)}_i$ extend to a sequence satisfying~\eqref{eq:compatibleNN}. 
The set of such $W_i^{(n_0)}$ forms a linear space, and the next theorem characterizes a subspace of it and enables us to find a basis for that subspace in Section~\ref{sec:computational}. We defer its proof to Appendix~\ref{sec:proof-compatibility}. 
\begin{assumption} \label{ass:comp} The sequences $\mscr V=\{\Vn\},\ \mscr U=\{\Un\}$ are obtained from direct sums and tensor products of the same sequence $\mscr V_0$. The sequence $\mscr V$ is generated in degree $d_{\rm g}$ and presented in degree $d_{\rm p}$. The mapping $W^{(n_0)}\in \mc L(V^{(n_0)},U^{(n_0)})^{G^{(n_0)}}$ at level $n_0\geq d_{\rm p}$ satisfies
$$ W^{(n_0)}(V^{(m)})\subseteq U^{(m)} \textrm{ for } m\leq d_{\rm g}.$$
\end{assumption}
\begin{theorem}\label{thm:ext_to_morph}
Suppose that the sequences $\mscr V=\{\Vn\},\ \mscr U=\{\Un\}$ and linear map $W^{(n_0)}\in \mc L(V^{(n_0)},U^{(n_0)})^{G^{(n_0)}}$ satisfy Assumption~\ref{ass:comp}. Then there is a unique extension $\{\Wn\}$ of $W^{(n_0)}$ to a sequence of equivariant linear maps satisfying~\eqref{eq:compatibleNN}. 
\end{theorem}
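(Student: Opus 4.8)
The plan is to reconstruct the entire sequence $\{\Wn\}$ from the single datum $A:=W^{(n_0)}|_{V^{(d_{\rm g})}}$, the restriction of the given map to the degree-$d_{\rm g}$ generators. Generation degree will force uniqueness, and presentation degree will secure existence; the hypothesis $n_0\ge d_{\rm p}$ is what ties the two together.

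For uniqueness I would argue as follows. If $\{\Wn\}$ is any compatible extension, then for $n\le n_0$ the relation $\Wnp|_{\Vn}=\Wn$ forces $\Wn=W^{(n_0)}|_{\Vn}$; in particular $A$ is determined, and by Assumption~\ref{ass:comp} it is a $G^{(d_{\rm g})}$-equivariant map $V^{(d_{\rm g})}\to U^{(d_{\rm g})}$. For $n>n_0$, writing $v\in\Vn$ as $v=\sum_i g_i v_i$ with $g_i\in\Gn$ and $v_i\in V^{(d_{\rm g})}$ (possible since $\mscr V$ is generated in degree $d_{\rm g}$), equivariance of $\Wn$ together with compatibility down to level $d_{\rm g}$ yields $\Wn(v)=\sum_i g_i A(v_i)$. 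Hence $A$ determines $\{\Wn\}$ at every level.

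This computation also supplies the construction for existence: define $\Wn:=W^{(n_0)}|_{\Vn}$ for $n\le n_0$ and $\Wn\big(\sum_i g_i v_i\big):=\sum_i g_i A(v_i)$ for $n\ge d_{\rm g}$, the two prescriptions agreeing on $d_{\rm g}\le n\le n_0$. The target is correct because $A(v_i)\in U^{(d_{\rm g})}\subseteq\Un$ and $\Un$ is $\Gn$-stable (for $n\le d_{\rm g}$ this is instead the Assumption's hypothesis $W^{(n_0)}(V^{(n)})\subseteq U^{(n)}$); granting well-definedness, $\Gn$-equivariance is immediate, compatibility holds because every $\Wn$ is the equivariant extension of the same $A$, and level $n_0$ recovers $W^{(n_0)}$.

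The hard part, and the only nontrivial step, will be well-definedness: that $\sum_i g_i v_i=0$ in $\Vn$ implies $\sum_i g_i A(v_i)=0$ in $\Un$. I would phrase this as the requirement that the $\Gn$-map induced by $A$ out of the induced sequence $\tilde{\mscr V}$, with $\tilde V^{(n)}=\RR[\Gn]\otimes_{\RR[G^{(d_{\rm g})}]}V^{(d_{\rm g})}$ and the natural surjection $\tilde V^{(n)}\twoheadrightarrow\Vn$, vanish on the relation (kernel) sequence $\mscr K$. This is exactly what presentation degree controls (Appendix~\ref{sec:presentation-degree}): $\mscr K$ is generated in degree $d_{\rm p}$, so it suffices to annihilate $K^{(d_{\rm p})}$. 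Since $n_0\ge d_{\rm p}$, at level $n_0$ the given $W^{(n_0)}$ is a genuine equivariant map on $V^{(n_0)}=\tilde V^{(n_0)}/K^{(n_0)}$, so the lift induced by $A$ kills $K^{(n_0)}$ and hence the images of its degree-$d_{\rm p}$ generators; as these generate all of $\mscr K$ under the group actions and inclusions, the induced map vanishes on $\mscr K$ everywhere. Equivalently, one can build the sequence inductively, extending $\Wn$ to $\Wnp$ on $\Vnp=\RR[\Gnp]\Vn$ by $\Gnp$-equivariance and using generation of $\mscr K$ in degree $d_{\rm p}\le n$ to reduce well-definedness to relations already inside $\Vn$, where equivariance of the previously-built $\Wn$ disposes of them.
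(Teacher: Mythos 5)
Your uniqueness argument is correct, and your overall skeleton (restrict to the degree-$d_{\rm g}$ generators, extend equivariantly, and push well-definedness onto a ``relations'' sequence controlled by the presentation degree) parallels the paper's proof in Appendix~\ref{sec:proof-compatibility}. The gap is in the existence step, and it is genuine: you take the free object to be $\tilde V^{(n)}=\RR[G^{(n)}]\otimes_{\RR[G^{(d_{\rm g})}]}V^{(d_{\rm g})}$, i.e.\ induction from $G^{(d_{\rm g})}$ alone, and then assert that the kernel $\mscr K$ of $\tilde{\mscr V}\twoheadrightarrow\mscr V$ ``is exactly what presentation degree controls.'' It is not. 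The presentation degree of Appendix~\ref{sec:presentation-degree} is defined through algebraically free modules built from the inductions $\mathrm{Ind}_{G^{(d)}H^{(n,d)}}^{G^{(n)}}$ (Definition~\ref{def:free_mods}), where $H^{(n,d)}$ are the stabilizing subgroups; the notation $\mathrm{Ind}_{G^{(d)}}(\,\cdot\,)$ there is shorthand for induction from the \emph{larger} subgroup $G^{(d)}H^{(n,d)}$, not from $G^{(d)}$. Omitting $H^{(n,d_{\rm g})}$ dumps into your $\mscr K$ all the ``stabilizer relations'' $gh\otimes v-g\otimes v$ with $h\in H^{(n,d_{\rm g})}$, and these are not generated in degree $d_{\rm p}$ --- indeed not in any bounded degree. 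Concretely, take the paper's simplest example $\mscr V=\{\RR^n\}$ with $G^{(n)}=\mathrm{S}_n$, where $d_{\rm g}=d_{\rm p}=1$: then $\tilde V^{(n)}=\RR[\mathrm{S}_n]$ is the regular representation, and the kernel of $\RR[\mathrm{S}_n]\to\RR^n$, $g\mapsto e_{g(1)}$, is spanned by all differences $g-g'$ with $g(1)=g'(1)$. One checks that $\RR[\mathrm{S}_n]\cdot K^{(r)}$ is the span of the differences $b-b'$ with $b^{-1}b'$ a permutation of $\{2,\dots,r\}$; so for any fixed $r$ (in particular $r=n_0$) and $n>r$, the element $h-\mathrm{id}$ with $h=(2\;\,n)$ lies in $K^{(n)}$ but not in $\RR[\mathrm{S}_n]\cdot K^{(n_0)}$. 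Hence ``it suffices to annihilate $K^{(d_{\rm p})}$'' fails, and your argument never shows that the induced map kills these missing relations.

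The missing relations \emph{are} killed by the map you construct, but only because of a property your proof never invokes: by Assumption~\ref{ass:comp}, $A(V^{(d_{\rm g})})\subseteq U^{(d_{\rm g})}$, and since $\mscr U$ is built from sums and tensor products of $\mscr V_0$ it is a $\mscr V_0$-module, so $H^{(n,d_{\rm g})}$ fixes $U^{(d_{\rm g})}$ pointwise and therefore $(gh)\cdot A(v)=g\cdot A(v)$. This is precisely where the hypothesis $W^{(n_0)}(V^{(m)})\subseteq U^{(m)}$ earns its keep (you only use it to say the target is right), and it is why the paper's proof induces from $G^{(d_j)}H^{(n,d_j)}$: the map $g\otimes v\mapsto g\cdot W_{d_j}v$ is well defined on $\mathrm{Ind}_{G^{(d_j)}H^{(n,d_j)}}^{G^{(n)}}F^{(d_j)}$ exactly because $H^{(n,d_j)}$ acts trivially on $U^{(d_j)}$, and only afterwards does one quotient by a kernel that the presentation degree genuinely controls. (A secondary point: the definition of presentation degree guarantees only that \emph{some} algebraically free module surjects onto $\mscr V$ with kernel generated in degree $d_{\rm p}$, not your particular single-degree induction; the paper sidesteps this by working with the $\mscr F$ supplied by that definition.) So to repair your proof, either replace $\tilde{\mscr V}$ by the paper's induction sequence, or split your $\mscr K$ into stabilizer relations (killed by the module property above) and the remaining relations (killed by presentation degree plus equivariance of $W^{(n_0)}$, as you argue). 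Your closing ``inductive'' variant has the same defect, since the new stabilizer relations at level $n+1$ do not lie in $\RR[G^{(n+1)}]$ applied to relations inside $V^{(n)}$.
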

Recall that we define the presentation degree in the appendix. 
In words, Theorem~\ref{thm:ext_to_morph} says that if $W^{(n_0)}$ is equivariant and its restrictions to lower-dimensional subspaces satisfy~\eqref{eq:compatibleNN}, then it uniquely extends to higher-dimensional weights satisfying~\eqref{eq:compatibleNN}.
In Section \ref{sec:computational}, we use this result to find a basis for such $W^{(n_0)}$, and use it to train compatible equivariant NNs.

\paragraph{Compatible activation functions.}\label{sec:comp_acti}
The majority of equivariant nonlinearities proposed in the literature are compatible, such as the following few examples. 
\begin{enumerate}[leftmargin=2mm]
     \item[] \textit{Entrywise activations and permutations.} Let $\sigma\colon\RR\to\RR$ be any nonlinear function. Define $\sn\colon (\RR^n)^{\otimes k}\to (\RR^n)^{\otimes k}$ by applying $\sigma$ to each entry of a tensor. Then each $\sn$ is $\Gn=\mathrm{S}_n$-equivariant, and $\{\sn\}$ is a compatible sequence with respect to the zero-padding embeddings if and only if $\sigma(0)=0$. 
     On the other hand, if $\sn\colon(\mbb S^{2^n})^{\otimes k}\to (\mbb S^{2^n})^{\otimes k}$ is again applied entrywise and we use the graphon embedding (see example at the end of Section~\ref{sec:free-descriptions}), then $\{\sn\}$ is a compatible sequence for \emph{any} $\sigma$.
    
    \item[] \textit{Bilinear layers}. The authors of \citep{pmlr-v139-finzi21a} map $(\Vn\otimes \Un)\oplus \Vn \to \Un$ by sending $(v\otimes u,v')\mapsto \langle v,v'\rangle u$, which is equivariant for \emph{any} group action on $\Vn,\Un$. This sequence of maps is also compatible.

    \item[] \textit{Gated nonlinearities.} The authors of \citep{weiler20183d} map $\Vn\oplus\RR\to \Vn$ by $(v,\alpha)\mapsto v\sigma(\alpha)$ where $\sigma\colon\RR\to\RR$ is a nonlinearity. These are equivariant maps for any group acting trivially on the $\RR$ component, and they form a compatible sequence.
    \end{enumerate}

\section{A COMPUTATIONAL RECIPE FOR LEARNING FREE NEURAL NETWORKS}\label{sec:computational}

In this section, we describe an algorithm to train free and compatible NNs.
We do so in two stages. First, we train our NN at a large enough level $n_0$ by finding bases for the weights and biases at that level and optimizing over the coefficients in this basis. 
Second, for any higher level $n>n_0$, we extend the trained NN at level $n_0$ by solving linear systems for the higher-dimensional weights and biases; for any lower level $n < n_0$ we project the NN using~\eqref{eq:freeNN}. 
Throughout, we fix consistent sequences $\mscr V_i,\mscr U_i$ and sequences of compatible equivariant nonlinearities $\{\sn_i\}$. 
We summarize our procedure in Algorithm~\ref{algo:free_descr}. In Appendix~\ref{sec:correctness}, we show that our procedure provably generates free and compatible NN.

\textbf{Finding bases for free NN.} To train free networks~\eqref{eq:freeNN}, we fix $n_0$ \emph{exceeding the presentation degrees} of $\mscr V_i\otimes\mscr U_i$ and $\mscr U_i$ for all $i=1,\ldots,L$, and find a basis for equivariant weights and invariant biases at level $n_0$ using the algorithm of~\citep{pmlr-v139-finzi21a}. 
Specifically, Theorem 1 in \citep{pmlr-v139-finzi21a} states that $b\in (U_i^{(n_0)})^{G^{(n_0)}}$ if and only if $b$ satisfies that
\begin{equation} \label{eq:free-system}
\begin{aligned}
    &B\, b = 0 \,\,\,\,\, &&\left(\text{for all } B \in \cB_i^{(n_0)}\right),\\ &(D - I)\, b = 0\,\,\,\,\, &&\left(\text{for all } D \in \cD_i^{(n_0)}\right),
\end{aligned}\end{equation}
where $\cB_i^{(n_0)}$ is a basis for the Lie algebra of $G^{(n_0)}$ and $\cD_i^{(n_0)}$ are discrete generators for $G^{(n_0)}$, which are finite sets.
Here $B$ and $D$ are represented as matrices acting on $U_i^{(n_0)}$. 
Since equivariant linear maps $W\colon V_i^{(n_0)}\to U_i^{(n_0)}$ are just the invariants in $\mc L(V_i^{(n_0)},U_i^{(n_0)})$, they constitute the kernel of the analogously-defined set of equations.
Thus, finding a basis for equivariant weights and invariant biases reduces to finding a basis for the kernel of a matrix~\eqref{eq:free-system}, which is typically large and sparse. 
This can be tackled either using a Krylov-subspace method~\citep{pmlr-v139-finzi21a} or a sparse LU decomposition~\citep{computing_nullSpace,spspaces}. 

\textbf{Finding bases for Compatible NN.} If we rather want to find basis for layers satisfying~\eqref{eq:compatibleNN}, 
we fix $n_0$ \emph{exceeding the presentation degrees} of $\mscr V_i$ and let $d_i$ be its generation degree. For most representations, \eqref{eq:compatibleNN} implies zero bias, so we focus on the weights.
We find a basis for weights $W_i^{(n_0)}$ satisfying the hypotheses of Theorem~\ref{thm:ext_to_morph} by noting that Assumption~\ref{ass:comp} holds if and only if $W_i^{(n_0)}$ is $G^{(n_0)}$-equivariant and satisfies
\begin{equation}
\left[\cP_{V_i^{(m)}} \otimes \left(I - \cP_{U_i^{(m)}}\right)\right] \, \mathrm{vec}\left( W_i^{(n_0)}\right) = 0,
\end{equation}
for all $m \leq d_i$,
where vec denotes the vectorization of a matrix by stacking its columns. 
This characterization allows us to find a basis for weights by finding a basis for the kernel of a sparse matrix.


\textbf{Extending to arbitrary dimensions.} 
For any $n>n_0$, we extend our trained network at level $n_0$ by finding the unique equivariant weights and biases at level $n$ projecting onto the trained ones as in~\eqref{eq:freeNN}. Formally, we find the unique $W_i^{(n)}$ and $b_i^{(n)}$ satisfying~\eqref{eq:free-system} with $n_0$ replaced by $n$, and 
\begin{equation}\label{eq:ext_lin_syst}\tag{\texttt{ExtSyst}}
\begin{aligned}
    &\begin{bmatrix} \mc P_{V_i^{(n_0)}}\otimes \mc P_{U_i^{(n_0)}}\end{bmatrix}\,\mathrm{vec}\left(W_i^{(n)}\right) = \mathrm{vec}\left(W_i^{(n_0)}\right),\\
     &\mc P_{U_i^{(n_0)}}\, b_i^{(n)} =  b_i^{(n_0)}.
\end{aligned}\end{equation}
This amounts to solving a linear system, which is again typically sparse. It can be solved 
via iterative methods, e.g., stochastic gradient descent or LSQR~\citep{paige1982lsqr}. For any $n < n_0$, we set $W_i^{(n)}$ and $b_i{(n)}$ via~\eqref{eq:freeNN}, which is equivalent to \eqref{eq:ext_lin_syst}
 with $n$ and $n_0$ swapped.

\begin{algorithm}[t]
	\caption{Train a freely-described (resp., compatible) neural network.}
	\label{algo:free_descr}
	\begin{algorithmic}
		\State \textbf{Input:} Architecture $\mscr V_i,\mscr U_i$, group sequence $\{\Gn\}$, and training data at level $n_0$. 
\vspace{.2cm}
        \State \textbf{Stage 1} Find free (resp., compatible) bases of the linear layers at level $n_0$. 
        \State \phantom{\hbox{\textbf{Stage 1}}} Optimize over the coefficients in the bases to train the network at level $n_0$.
\vspace{.2cm}
        \State \textbf{Stage 2} For any $n > n_0$, solve~\eqref{eq:ext_lin_syst} to extend the trained network to level $n$.
        \State \phantom{\hbox{\textbf{Stage 2}}} For any $n < n_0$, project via~\eqref{eq:freeNN} to restrict the trained network to level $n$.
	\end{algorithmic}
\end{algorithm}

\section{NUMERICAL EXPERIMENTS}\label{sec:experiments}

\begin{figure*}[t!]
    \centering
    \begin{subfigure}[t]{0.32\linewidth}
        \centering
        \qquad Trace
        \vspace{0.5em}
        \includegraphics[width=\linewidth]{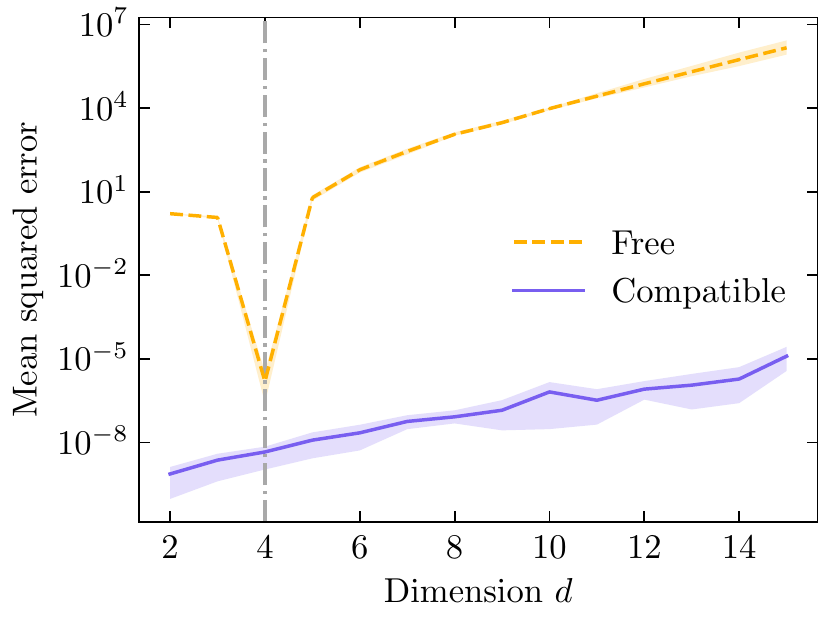}
    \end{subfigure}%
    \hfill
    \begin{subfigure}[t]{0.32\linewidth}
        \centering
        \qquad Diagonal extraction
        \vspace{0.5em}
        \includegraphics[width=\linewidth]{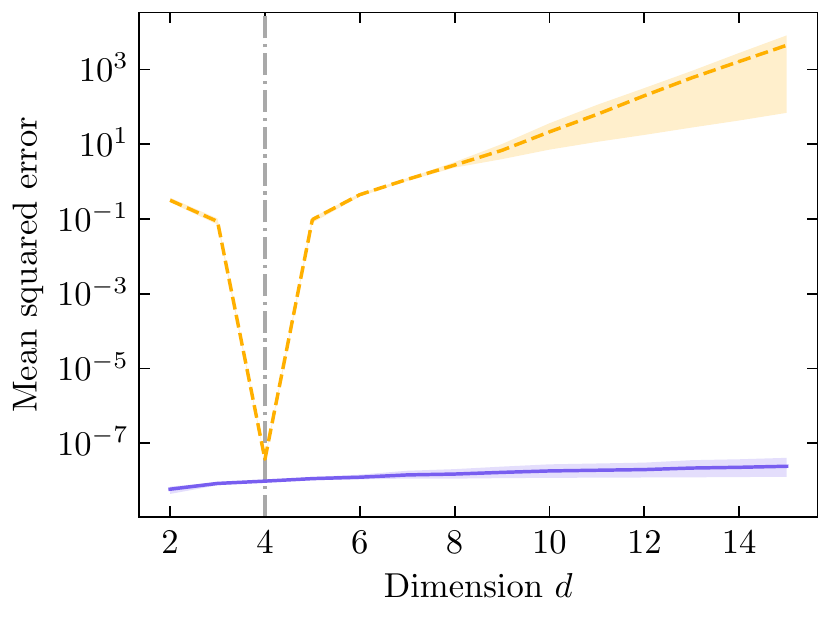}
    \end{subfigure}
    \hfill
    \begin{subfigure}[t]{0.32\linewidth}
        \centering
        \qquad Symmetric projection
        \vspace{0.5em}
        \includegraphics[width=\linewidth]{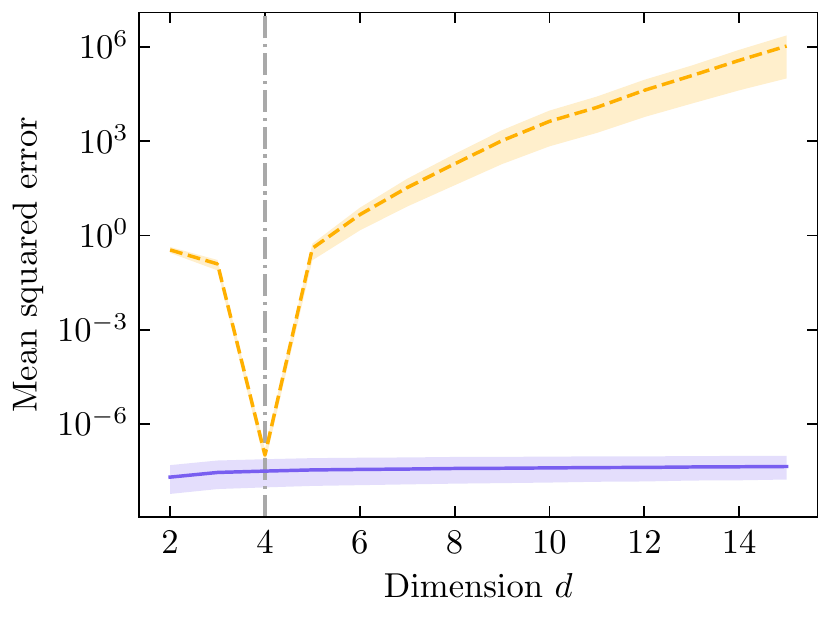}
    \end{subfigure}%
    \\ \centering 
    \begin{subfigure}[t]{0.32\linewidth}
        \centering
        \qquad Top singular vector
        \includegraphics[width=\linewidth]{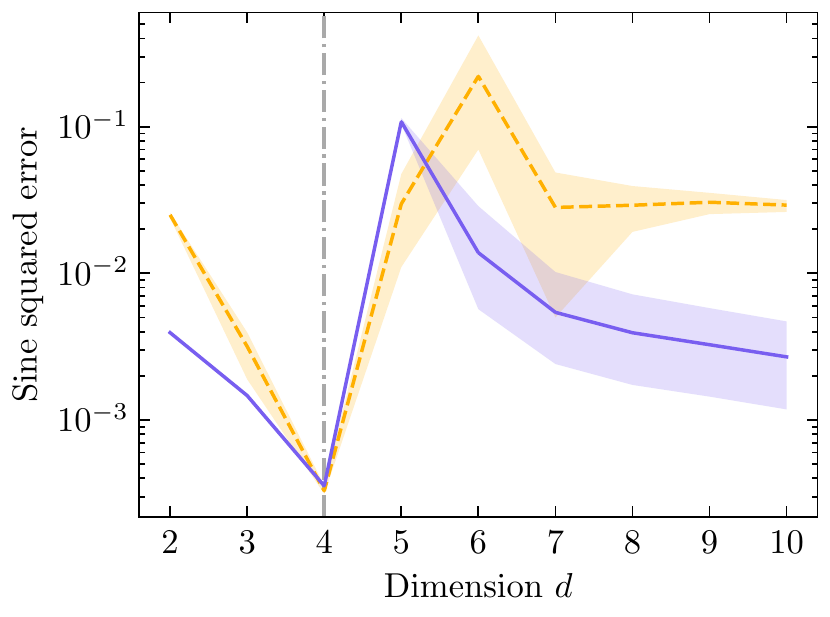}
    \end{subfigure}
    \quad
    \begin{subfigure}[t]{0.32\linewidth}
        \centering
        \quad Orthogonal invariance task
        \includegraphics[width=\linewidth]{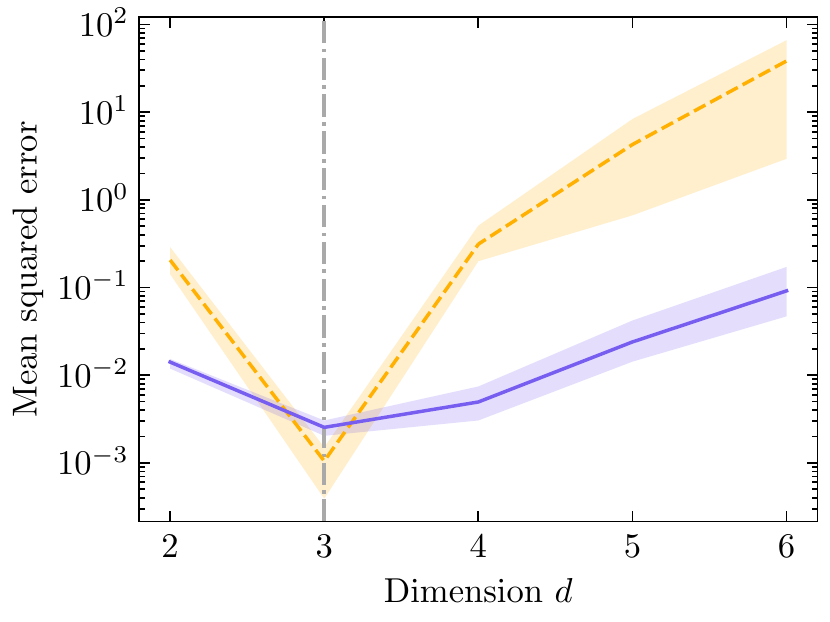}
    \end{subfigure}
    \caption{Test errors across dimensions for free and compatible networks. Each experiment is run three times; the lighter bands show the max and min runs, while the bold line shows the average. For diagonal extraction and symmetric projection, we measure the average MSE per entry. Vertical gray lines mark the dimension used for learning.}\label{fig:errors}
\end{figure*}
In this section, we use Algorithm~\ref{algo:free_descr} to learn some of the compatible mappings from Section~\ref{sec:compatibility}. Our implementation is based on that of~\citep{pmlr-v139-finzi21a}, and is available at 
\begin{center}
    \mbox{\url{https://github.com/mateodd25/free-nets}.}
\end{center}
For each experiment, we aim to learn a sequence of mappings $\{\fn\colon \Vn_1\to \Vn_{L+1}\}$. To do so, we fix a level $n_0$ and randomly generate data $\{(X_i,f^{(n_0)}(X_i))\}\subseteq V_1^{(n_0)}\times V_{L+1}^{(n_0)}$. To fit the data, we consider the architectures described in the Appendix~\ref{sec:app-description} and compute bases for weights and biases as we describe in Section~\ref{sec:computational}. We then optimize over the coefficients in those bases using ADAM~\citep{kingma2014adam} starting from random initialization. 
Finally, we extend the trained network at level $n_0$ to several levels $n$. We evaluate our extended network on random test data generated at each level from the ground-truth sequence of maps. 

We consider five examples from Section~\ref{ex:comp}: the trace $f(X) = \Tr(X),$ 
diagonal extraction 
$f(x) = \diag(\diag(X))$, 
symmetric projection
$f(X) = (X+X^\top)/2$, 
the top right-singular vector 
$f(X) = \argmax_{\|v\|=1}\|Xv\|$, and the function defined in \eqref{eq:sin}. The first and last examples are invariant functions; the rest are equivariant mappings. For the first four examples, we set $\Gn = \mathrm{S}_n$, and for the last one, we set $\Gn = \mathrm{O}(n).$ We train with $3000$ data points generated randomly and evaluate the test error at each dimension on $1000$ fresh samples. 

We use the mean squared error (MSE) as our loss function for all the experiments except for learning the top singular vector. For that experiment, we use the squared sine loss proposed by \citep{maron2018invariant}, i.e., $\ell(\widehat{y}, y) = 1 - \dotp{\widehat{y} ,y}^2/\|\widehat{y}\|^2 \|y\|^2$. The resulting errors in each dimension for both free and compatible NNs are shown in Figure~\ref{fig:errors}. The test errors at the trained dimension are competitive with those obtained in~\citep{maron2018invariant,pmlr-v139-finzi21a, villar2021scalars} for learning the same mappings, see Table~\ref{tab:err_comparison} for comparison. Moreover, the errors we obtain for higher-dimensional extensions are competitive with errors obtained in the literature for NNs trained at those higher dimensions. For the orthogonally invariant task example, we trained our compatible NN in 3 dimensions using 3000 data points and got an error of 2e-2 when extending to 5 dimensions. In contrast, the NN of~\citep{villar2021scalars}, which is tailored to orhogonal equivariance, achieves an error of 9e-3 when trained in 5 dimensions on the same number of points, while the NN of~\citep{pmlr-v139-finzi21a} trained in the same dimension achieves 4e-2. 

\begin{table}[h]
    \centering
    \begin{tabular}{@{}lll@{}}
        \toprule
        \textbf{Example} & \textbf{Ours} & \textbf{Previous} \\ \toprule
        trace & 3e-8 & 1e-3~\citep{maron2018invariant} \\ \midrule
        diag & 1e-8 & 7e-6~\citep{maron2018invariant}\\ \midrule
        sym & 3e-8 & 7e-6~\citep{maron2018invariant} \\ \midrule
        svd & 4e-4 & 1.6e-3~\citep{maron2018invariant} \\ \midrule
        \multirow{2}{*}{orth.} & \multirow{2}{*}{2e-3} & 1e-2~\citep{pmlr-v139-finzi21a};\\ & & 5e-4~\citep{villar2021scalars} \\ \bottomrule
    \end{tabular}
    \caption{Comparison of errors from the literature}\label{tab:err_comparison}
\end{table}

 Due to memory limitations, we set the training level to $n_0 = 3$ for the orthogonal invariance task, whereas Algorithm~\ref{algo:free_descr} would require $n_0=6$ to uniquely extend free networks\footnote{ 
 We find the minimum norm solution for the weights and biases using LSQR~\citep{paige1982lsqr}.}  since the presentation degree of the linear maps between the hidden layers equal to six\,---\, see Theorem~\ref{thm:correctness} in the appendix.
 This highlights an advantage of imposing compatibility\,---\, it allows us to uniquely extend a trained network from lower-dimensional data. Moreover, we see that imposing our compatibility condition yields substantially lower errors across dimensions. Remarkably, the test error for free NNs increases by many orders of magnitude for simple functions, 
a phenomenon that was previously noted by~\citep{maron2018invariant}.
This further underscores the importance of compatibility when generalizing to other dimensions.

\section{LIMITATIONS}\label{sec:limitations}
\paragraph{Extension to non-compact groups.}
We assumed that the group is compact and acts orthogonally so that the orthogonal projection $\mathcal{P}_{\Vn} = \phin^*$ is $\Gn$-equivariant, used for instance in the proof of Proposition~\ref{prop:surj_imp_inj}. 
This property holds more generally when $\Gn$ is linearly reductive.
Thus, our framework extends to non-compact groups such as the Lorentz group $\mathrm{O}(3,1)$ and so-called rescaling groups $\RR_{>0}^n$ consisting of diagonal matrices with strictly positive entries, both of which are important to applications of equivariant deep learning to physics~\cite{pmlr-v119-bogatskiy20a,JMLR:v24:22-0680}.

Some groups arising in applications are not reductive however, including any group containing translations such as the group of rigid motions $\mathrm{SE}(n)$. 
This difficulty can be circumvented in some cases, such as centering the data when learning an $\mathrm{SE}(n)$-invariant function and imposing only $\mathrm{SO}(n)$-invariance on the architecture. In general, however, more work is needed to address the non-reductive case. 

\paragraph{Compatible activation functions and biases.}
Our compatibility conditions~\eqref{eq:compatibleNN} require the activation functions we use to be compatible with our embeddings relating the different dimensions. 
For example, applying any activation function $\sigma\colon\RR\to\RR$ entrywise always satisfies~\eqref{eq:compatibleNN} for the graphon embeddings. 
However, only activations with $\sigma(0)=0$ satisfy~\eqref{eq:compatibleNN} for the zero-padding embeddings. 
On the one hand, some activation functions used in practice, such as the sigmoid $\sigma(x)=(1+e^{-x})^{-1}$, do not satisfy this condition. 
Moreover, compatibility requires the biases to be the same across dimensions, which implies that the biases must be multiples of the first canonical basis vector when using the zero-padding embedding, and a multiple of the all-1's matrix when using the graphon embedding.
Therefore, compatibility limits the architectures that we can use. 
On the other hand, if the sequence of mappings we are trying to learn is known in advance to satisfy compatibility, then it is desirable to use architectures that enforce compatibility as well. 
In that case the mapping we learn also generalizes correctly to other dimensions, as the numerical experiments in Section~\ref{sec:experiments} demonstrate.

\paragraph{Scalability.}

We hope to address several limitations of our approach in future work. 
First, while our compatibility condition improves generalization, it can be restrictive in certain settings. For example, it often yields zero bias. It would be interesting to learn the relation between maps in different dimensions directly from the data rather than assume it is known in advance. 
Second, for examples like the leading eigenvector, the error increases substantially when extending to higher dimensions, even if we impose compatibility. It would be interesting to improve generalization to higher dimensions in these examples by 
imposing additional compatibility conditions. 
Third, extending a trained network to higher dimensions in our examples involved solving large and sparse systems, which we currently do not fully exploit. Incorporating sparsity in the toolbox of~\citep{pmlr-v139-finzi21a} will enable training larger 
networks.

\section{CONCLUSIONS AND FUTURE WORK} \label{sec:conclusions}
We leveraged representation stability to prove that a broad family of equivariant NNs extends to higher dimensions, which enables us to train an infinite sequence of NNs using a finite amount of data. Extending networks to higher dimensions 
often results in substantial test errors. 
To improve generalization,
we introduced a compatibility condition relating networks across dimensions. We characterized compatibility and developed an algorithm to train free and compatible NNs. Finally, we applied our method to several numerical examples from the literature. In these examples, free NNs trained without imposing compatibility generalize poorly to higher dimensions, even when learning simple linear functions. In contrast, compatible NNs generalize significantly better.

As highlighted in Section~\ref{sec:limitations}, our current implementation faces scalability challenges, primarily due to the inadequate support for GPU-accelerated sparse linear algebra operations in Python. Notably, there are already some efforts to close this gap~\cite{potapczynski2023cola}. Moving forward, we are optimistic about enhancing our framework to leverage these advancements effectively. As we also mentioned in Section~\ref{sec:limitations}, the compatibility condition constraints the family of functions one could hope to learn. This presents a compelling avenue for future research: exploring other regularization strategies that lead to good generalization across dimensions. Moreover, the concepts introduced in this paper paved the way to studying any-dimensional statistical learning, raising several natural questions: Can we bound the generalization error across dimensions? How do we characterize the notation of statistical complexity for classes of any-dimensional problems? And how do we approach the formulation of lower bounds within this context?


\section*{Acknowledgments}
We thank Venkat Chandrasekaran for his constant support and encouragement during the development of this work. Both authors were funded by Venkat Chandrasekaran through AFOSR grant FA9550-20-1-0320. MD was partially funded by Joel Tropp through ONR BRC Award N00014-18-1-2363, and NSF FRG Award 1952777.

\bibliographystyle{abbrvnat}
\bibliography{biblio}
\appendix
\section{MORE EXAMPLES OF CONSISTENT SEQUENCES}
We let $\{\phin \colon \RR^n \rightarrow \RR^{n+1}\}$ denote the sequence of zero-padding embeddings.
\begin{itemize}[leftmargin=2mm]
\item[] \textit{Signed permutation sequences.} Let $B_n$ (resp. $D_n$) be the signed permutation group (resp. even-signed permutation group) in $n$ elements. Consider the sequence of representations $\Vn = \RR^n$ with the standard action of permuting and flipping the signs of the coordinates. Then, $\mscr V = \{(\Vn, \varphi)\}$ is a consistent sequence with generation and presentation degrees equal to one. 
 \item[] \textit{Orthogonal rotation sequences.} Let $O(n)$ (resp., $SO(n))$ be the orthogonal group (resp., special orthogonal group) in dimension $n$. Consider the sequence of representations $\Vn = \RR^n$ with the standard rotation action. Then, $\mscr V = \{(\Vn, \varphi)\}$ is again a consistent sequence with generation degree equal to one. Characterizing the presentation degree is an open question. Yet, numerically, the presentation degree for $S_n$ seems to work well. 
 \item[] \textit{Lorentz sequences.} Let $O(1, n)$ be the Lorentz group (resp., special orthogonal group) in dimension $n + 1$. Again, consider the sequence of representations $\Vn = \RR^{n+1}$ with the natural action of the Lorentz group. Then, $\mscr V = \{(\Vn, \varphi)\}$ is once more a consistent sequence with generation degree equal to one.
 \item[] \textit{Tensor products.} For any of the above sequences, we can take tensor powers $\mscr V^{\otimes k}=\{(\Vn^{\otimes k},\phin^{\otimes k})\}$ to obtain sequences of tensors. Here $\phin^{\otimes k}$ zero-pads a $n\times\cdots\times n$ tensor $T\in (\RR^n)^{\otimes k}$ to a $(n+1)\times\cdots\times(n+1)$ tensor, and the group acts on rank-1 tensors by $g\cdot(x_1\otimes\cdots\otimes x_k)=(gx_1)\otimes\cdots\otimes(gx_k)$ which extends to an action on general tensors by linearity. For a example, a permutation $g\in \mathrm{S}_n$ acts on $T\in(\RR^n)^{\otimes k}$ by $(g\cdot T)_{i_1,\ldots,i_k}=T_{g(i_1),\ldots,g(i_k)}$. 

 \item[] \textit{Direct sums.} For any of the above sequences, we can take direct sums $\mscr V^{\oplus k}=\{(\Vn^{\oplus k},\phin^{\oplus k})\}$. Here $\phin^{\oplus k}$ zero-pads each of the $k$ vectors in an element of $(\RR^n)^{\oplus k}$, and the group acts on each vector in such an element by $g\cdot(x_1,\ldots,x_k)=(g\cdot x_1,\ldots,g\cdot x_k)$.
\end{itemize}

\section{PRESENTATION DEGREE}\label{sec:presentation-degree}
In this section, we give a definition of the presentation degree.
In what follows, we introduce a series of preliminary definitions. We highlight that the concepts in this section are technical in nature and require a certain familiarity with abstract algebra. 
We refer the interested reader to~\cite[\S4.1]{levin2023free} for a more accessible presentation of this material motivated by the problem of extending an equivariant map to a compatible sequence. 
For this section, we drop the boldface notation since we need to use additional superscripts. 

\begin{definition}[Stabilizing subgroups]
    If $\mscr V = \{\Vnn\}$ is a consistent sequence of $\{\Gnn\}$-representations, define its \emph{stabilizing subgroups} $\{H^{(n,d)}\}_{d\leq n}$ by
    \begin{equation*}
        H^{(n,d)} = \{g\in \Gnn: g\cdot v = v \textrm{ for all } v\in V^{(d)}\}.
    \end{equation*}
\end{definition}
In the context of $\Gnn = S_n$ and $\Vnn = \RR^n$, the subgroup $H^{(n,d)}$ corresponds to the permutations in $n$ elements that leave fixed the first $d$ components.
\begin{definition}[Modules]
    Let $\mscr V$ and $\mscr U = \{\Unn\}$ be consistent sequences of $\{\Gnn\}$-representations and let $\{H^{(n,d)}\}$ be the stabilizing subgroups of $\mscr V$. Then $\mscr U$ is a \emph{$\mscr V$-module} if $U^{(d)}\subseteq (\Unn)^{H^{(n,d)}}$ for all $d\leq n$. 
\end{definition}
Recall that if $H\subseteq G$ is a subgroup and $U$ is an $H$-representation, the induced representation of $U$ to $G$ is $\mathrm{Ind}_H^GU = \RR[G]\otimes_{\RR[H]}U$. If $G\subseteq G'$ and $H\subseteq H'$ are subgroups satisfying $H'\cap G=H$ and $U$ is an $H'$-representation, we have an inclusion $\mathrm{Ind}_H^GU\subseteq \mathrm{Ind}_{H'}^{G'}U$. 
\begin{definition}[Induction and algebraically free sequences]\label{def:free_mods}
Let $\mscr V$ by a consistent sequence of $\{\Gnn\}$-representations, and for $d\leq n$ let $H^{(n,d)}\subseteq \Gnn$ be its stabilizing subgroups. 
\begin{enumerate}[(a),leftmargin=8mm]
    \item Fix $d\in\NN$, and a $G^{(d)}$-representation $U^{(d)}$.
    Define the \emph{$\mscr V$-induction sequence} 
    \begin{equation*}
        \mathrm{Ind}_{G^{(d)}}(U^{(d)}) = \left\{\mathrm{Ind}_{G^{(d)}H^{(n,d)}}^{\Gnn}U^{(d)}\right\}_n,
    \end{equation*}
    where the induced representation is taken to be 0 when $n<d$, and $G^{(d)}H^{(n,d)}=\{gh: g\in G^{(d)},\ h\in H^{(n,d)}\}$ is the subgroup generated by $G^{(d)}$ and $H^{(n,d)}$ inside $\Gnn$. The $\mscr V$-induction sequence is a $\mscr V$-module by construction. 

    \item A consistent sequence $\mscr F$ is an \emph{algebraically free $\mscr V$-module} if it is a direct sum of $\mscr V$-induction sequences. The sequence $\mscr V$ itself is algebraically \emph{free} if it is an algebraically free $\mscr V$-module.
\end{enumerate}
\end{definition}
Since $ghg^{-1}\in H^{(n,d)}$ for any $g\in G^{(d)}$ and $h\in H^{(n,d)}$, the subgroup generated by $G^{(d)}$ and $H^{(n,d)}$ inside $\Gnn$ is the set of products $G^{(d)}H^{(n,d)}$. As we have inclusions $G^{(d)}H^{(n,d)}\subseteq G^{(d)}H^{(n+1,d)}$ and $\Gnn\subseteq \Gnnp$, and moreover $G^{(d)}H^{(n+1,d)}\cap \Gnn = G^{(d)}H^{(n,d)}$, we have inclusions $\mathrm{Ind}_{G^{(d)} H^{(n,d)}}^{\Gnn}U^{(d)} \subseteq \mathrm{Ind}_{G^{(d)} H^{(n+1,d)}}^{\Gnnp}U^{(d)}$. This shows that $\mathrm{Ind}_{G^{(d)}}(U^{(d)})$ is indeed a consistent sequence.

\begin{example}
Let $\mscr V=\{\RR^n\}$ with embeddings by zero-padding and the action of $\Gnn=\mathrm{S}_n$ from Section~\ref{sec:free-descriptions}. Then $\mscr V = \mathrm{Ind}_{G^{(1)}}V^{(1)}$, hence it is algebraically free.
\end{example}

Any consistent sequence is a quotient of an algebraically free one. To formalize this, we first define an appropriate notion of maps between consistent sequences.
\begin{definition}[Morphisms of sequences]
    If $\mscr V = \{\Vnn\}$ and $\mscr U=\{\Unn\}$ are consistent sequences of $\{\Gnn\}$-representations, then a \emph{morphism of sequences} $\mscr V\to\mscr U$ is a sequence of equivariant linear maps $\{\Wnn\in \mc L(\Vnn,\Unn)^{\Gnn}\}$ satisfying $\Wnnp|_{\Vnn} = \Wnn$. 
\end{definition}
Note that~\eqref{eq:compatibleNN} precisely requires the sequences of weights in a compatible NN to define a morphism of sequences. 
To write a general consistent sequence $\mscr V=\{V^{(n)}\}$ as a quotient of a free one, suppose $\mscr V$ is generated in degree $d$. Define the algebraically free $\mscr V$-module $\mscr F = \bigoplus_{i=1}^d\mathrm{Ind}_{G^{(i)}}(V^{(i)}) = \{F^{(n)}\}$ and consider the sequence of linear maps $\Wnn\colon F^{(n)}\to V^{(n)}$ sending $g\otimes v\mapsto g\cdot v$ for each $g\otimes v\in \mathrm{Ind}_{G^{(i)}}(V^{(i)})$. It is easy to check that $\{\Wnn\}\colon\mscr F\to\mscr V$ is a morphism of sequences and that $\Wnn$ is surjective for any $n\geq d$, as the image of $\Wnn$ is precisely $\RR[\Gnn]V^{(\min\{n, d\})} = V^{(n)}$.
The sequence of kernels of such a morphism of sequence $\{\Wnn\}$ is also a consistent sequence.
\begin{proposition}
    If $\mscr V=\{(\Vnn,\phin)\}$ and $\mscr U = \{\Unn\}$ are consistent sequences and $\{\Wnn\}\colon\mscr V\to\mscr U$ is a morphism of sequences, then $\ker\{\Wnn\}=\{(\ker \Wnn,\phin)\}$ is a consistent sequence.
\end{proposition}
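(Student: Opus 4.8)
The plan is to verify the two conditions in Definition~\ref{def:consistent_seqs} for the sequence $\ker\{\Wnn\}=\{(\ker\Wnn,\phin)\}$. The representations are $\ker\Wnn\subseteq\Vnn$, and we must check (a) that each $\ker\Wnn$ is an orthogonal $\Gnn$-representation, and (b) that $\phin$ restricts to a $\Gnn$-equivariant isometry $\ker\Wnn\hookrightarrow\ker\Wnnp$. Since $\mscr V$ is already a consistent sequence, the inner product on $\ker\Wnn$ is inherited from $\Vnn$, and $\phin$ is already an isometry; so the real content is showing that $\ker\Wnn$ is $\Gnn$-invariant as a subspace and that $\phin$ maps $\ker\Wnn$ into $\ker\Wnnp$.

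First I would check that $\ker\Wnn$ is a $\Gnn$-subrepresentation of $\Vnn$. Because $\Wnn$ is equivariant, for any $v\in\ker\Wnn$ and $g\in\Gnn$ we have $\Wnn(g\cdot v)=g\cdot(\Wnn v)=g\cdot 0=0$, so $g\cdot v\in\ker\Wnn$. Thus $\ker\Wnn$ is $\Gnn$-invariant, and it inherits the orthogonal structure from $\Vnn$, giving condition (a). Second, I would verify that $\phin$ sends $\ker\Wnn$ into $\ker\Wnnp$: the morphism condition $\Wnnp\circ\phin=\Wnn$ (writing the inclusion explicitly, $\Wnnp|_{\Vnn}=\Wnn$) means that for $v\in\ker\Wnn$ we have $\Wnnp(\phin(v))=\Wnn(v)=0$, so $\phin(v)\in\ker\Wnnp$. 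Hence $\phin$ restricts to a well-defined map $\ker\Wnn\to\ker\Wnnp$, and this restriction remains a $\Gnn$-equivariant isometry since $\phin$ was one on all of $\Vnn$, giving condition (b).

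There is essentially no hard part here: the proof is a direct unwinding of equivariance and the morphism compatibility $\Wnnp|_{\Vnn}=\Wnn$. The only point requiring any care is to confirm that the restriction of $\phin$ to $\ker\Wnn$ still lands in the subspace $\ker\Wnnp$ rather than merely in $\Vnnp$, which is exactly what the morphism-of-sequences condition delivers; once that containment is established, the equivariance and isometry properties are simply inherited from the ambient sequence $\mscr V$. I would therefore present the argument in two short bullet-free steps mirroring (a) and (b) of Definition~\ref{def:consistent_seqs}, and conclude that $\ker\{\Wnn\}$ satisfies both, so it is a consistent sequence.
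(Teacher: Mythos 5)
Your proof is correct and follows essentially the same route as the paper's: show $\ker\Wnn$ is a $\Gnn$-subrepresentation via equivariance, then show the embeddings restrict to equivariant isometries between kernels. In fact you are slightly more careful than the paper, which leaves implicit the containment $\phin(\ker\Wnn)\subseteq\ker\Wnnp$ that you correctly derive from the morphism condition $\Wnnp|_{\Vnn}=\Wnn$.
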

\begin{proof}
    Since $\Wnn$ is $\Gnn$-equivariant, its kernel is a $\Gnn$-subrepresentation of $\Vnn$. The embeddings $\phin$ remain $\Gnn$-equivariant isometries when restricted to $\ker \Wnn$.
\end{proof}
We are now ready to define the presentation degree.
\begin{definition}[Presentation degree]
Let $\mscr V_0$ be a consistent sequence of $\{\Gnn\}$-representations. We say that a $\mscr V_0$-module $\mscr V$ is generated in degree $d$, and {presented in degree} $$k=\max\{d,r\}$$ if there exists an algebraically free $\mscr V_0$-module $\mscr F$ generated in degree $d$ and a morphism of sequences $\{\Wnn\}\colon\mscr F\to\mscr V$ such that $\Wnn$ is surjective for all $n$ and $\ker\{\Wnn\}$ is generated in degree $r$. The \emph{presentation degree} is the smallest such $k \in \NN$. 
\end{definition}
Note that the presentation degree is always at least as large as the generation degree and that the two are equal for algebraically free sequences.
As stated in Section~\ref{sec:free-descriptions}, the projections in Proposition~\ref{prop:surj_imp_inj} become isomorphisms starting from the presentation degree. 
The following proof of this fact, usually stated in the representation stability literature in terms of \emph{coinvariants}~\cite[\S3]{FImods}, appeared first in~\citep{levin2023free}. 
\begin{proposition}\label{prop:presentation_implies_isom_of_canon}
    Let $\mscr V_0$ be a consistent sequence of $\{\Gnn\}$-representations and $\mscr V$ be a $\mscr V_0$-module presented in degree $k$. Then, the maps $\mc P_{\Unn}\colon (\Vnnp)^{\Gnnp}\to (\Vnn)^{\Gnn}$ are isomorphisms for all $n\geq k$.
\end{proposition}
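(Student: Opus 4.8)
The plan is to dualize the statement and prove it by a five-lemma diagram chase built from the presentation of $\mscr V$. For each consistent sequence write $\iota_{\Vnn}\colon (\Vnn)^{\Gnn}\to (\Vnnp)^{\Gnnp}$ for the ``upward'' map $v\mapsto \mc P_{(\Vnnp)^{\Gnnp}}(\phin v)$. A direct computation with the orthogonal projections shows that $\mc P_{\Vnn}$ and $\iota_{\Vnn}$ are mutually adjoint; since a linear map between finite-dimensional inner product spaces is invertible iff its adjoint is, $\mc P_{\Vnn}$ is an isomorphism iff $\iota_{\Vnn}$ is, and I would prove the latter. The advantage of the upward maps is that they are \emph{natural}: because $\Gnnp$ acts orthogonally, the averaging projection onto $(\cdot)^{\Gnnp}$ commutes with every $\Gnnp$-equivariant map, and the embeddings intertwine the maps of a morphism of sequences; hence $\iota$ commutes with inclusions and with any $\{\Wnn\}$, whereas the downward projections $\mc P$ do not.

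Next I would use exactness of the invariants functor for compact groups (via the Reynolds operator / averaging). Fix a presentation $0\to\mscr K\to\mscr F\to\mscr V\to 0$ as in the definition of presentation degree, with $\mscr F$ algebraically free generated in degree $d$ and $\mscr K=\ker\{\Wnn\}$ generated in degree $r$, so $k=\max\{d,r\}$. Applying invariants level by level yields short exact sequences $0\to(\Knn)^{\Gnn}\to(\Fnn)^{\Gnn}\xrightarrow{\Wnn}(\Vnn)^{\Gnn}\to0$, and the maps $\iota_K,\iota_F,\iota_V$ assemble them into a ladder between levels $n$ and $n+1$ whose squares commute by the naturality above. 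Two inputs feed the chase. First, $\iota_F$ is an isomorphism for $n\geq d$: each summand of $\mscr F$ is an induction sequence $\mathrm{Ind}_{G^{(i)}}(U^{(i)})$, and Frobenius reciprocity identifies its invariants at level $n$ with $(U^{(i)})^{G^{(i)}H^{(n,i)}}$; since $\mscr F$ is a $\mscr V_0$-module the stabilizer $H^{(n,i)}$ fixes $U^{(i)}$, so this equals $(U^{(i)})^{G^{(i)}}$ for all $n\geq i$, independent of $n$, and $\iota_F$ is the identity under this identification (equivalently, combine injectivity from Proposition~\ref{prop:surj_imp_inj} with the constant dimension count). Second, Proposition~\ref{prop:surj_imp_inj} gives that $\mc P_{\Knn}$ is injective for $n\geq r$, i.e.\ $\iota_K$ is surjective for $n\geq r$.

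For $n\geq k$ I would then run the standard chase. Surjectivity of $\iota_V$ is immediate: $\iota_F$ is onto and the bottom row is exact, so $\iota_V\circ\Wnn=\Wnnp\circ\iota_F$ is onto, hence $\iota_V$ is onto because $\Wnn$ is. For injectivity, take $v\in(\Vnn)^{\Gnn}$ with $\iota_V v=0$ and lift it to $f\in(\Fnn)^{\Gnn}$ with $\Wnn f=v$; then $\Wnnp(\iota_F f)=\iota_V(\Wnn f)=0$, so $\iota_F f$ lies in $\ker\Wnnp$, which by exactness of the bottom row is the image of $(\Knnp)^{\Gnnp}$. Pulling this preimage back along the surjection $\iota_K$ and then along the bijection $\iota_F$, and using commutativity of the left square, shows $f$ is the image of some element of $(\Knn)^{\Gnn}$; exactness of the top row gives $v=\Wnn f=0$. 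Thus $\iota_V$, and therefore $\mc P_{\Vnn}$, is an isomorphism for all $n\geq k$.

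I expect the main obstacle to be the base case for free modules. Identifying the invariants of an induction sequence via Frobenius reciprocity and verifying that the stabilizing subgroups $H^{(n,i)}$ act trivially on the inducing representation $U^{(i)}$ — so that the invariants stabilize \emph{exactly} from the generation degree onward — is precisely where the representation-stability content enters. By contrast, the surrounding bookkeeping (exactness of invariants, the adjunction $\mc P_{\Vnn}=\iota_{\Vnn}^*$, and the five-lemma chase) is routine.
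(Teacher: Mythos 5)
Your proof is correct, and it shares the paper's skeleton — the same presentation $0\to\mscr K\to\mscr F\to\mscr V\to 0$, exactness of invariants via averaging, the identification $\bigl(\mathrm{Ind}_H^G U\bigr)^G\cong U^H$ for the free part, and Proposition~\ref{prop:surj_imp_inj} as the source of injectivity/surjectivity — but it finishes by a genuinely different mechanism. The paper never dualizes or chases a diagram: it observes that $\dim\,(\Fnn)^{\Gnn}$ is \emph{constant} for $n\geq k$ (same Frobenius computation as yours), that $\dim\,(\Vnn)^{\Gnn}$ and $\dim\,(\Knn)^{\Gnn}$ are \emph{nonincreasing} by the injectivity in Proposition~\ref{prop:surj_imp_inj}, and then applies rank--nullity, $\dim\,(\Vnn)^{\Gnn}=\dim\,(\Fnn)^{\Gnn}-\dim\,(\Knn)^{\Gnn}$, to force all three dimensions to be constant; injectivity plus equal dimensions then gives the isomorphism. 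Your route instead passes to the adjoint "upward" maps $\iota$, establishes their naturality with respect to morphisms of sequences (which, as you correctly note, the downward projections lack), and runs a five-lemma chase on the ladder of short exact sequences of invariants. What your approach buys is conceptual: it explains functorially \emph{why} stabilization propagates from $\mscr F$ and $\mscr K$ to $\mscr V$, and it would survive in settings where a raw dimension count is unavailable; the cost is extra machinery (the adjunction $\mc P_{\Vnn}=\iota_{\Vnn}^*$, naturality of the Reynolds operator, commutativity of both squares), all of which you do justify or which follows routinely from compactness and orthogonality of the actions. Two small remarks: your parenthetical claim that $\iota_F$ "is the identity" under the Frobenius identification is not quite right as stated (it is only an isomorphism, possibly a nontrivial scalar on each induced summand, depending on normalization of the identification), but this is immaterial since your fallback argument — injectivity from Proposition~\ref{prop:surj_imp_inj} plus the constant-dimension count — is exactly what is needed; and in your surjectivity step the clause "because $\Wnn$ is [onto]" is superfluous, since $\iota_V\circ \Wnn$ onto already implies $\iota_V$ onto.
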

\begin{proof}
    As $\mscr V$ is presented in degree $k$, there exists an algebraically free $\mscr V_0$-module $\mscr F=\{\Fnn\}$ and a surjective morphism $\{\Wnn\}\colon\mscr F\to \mscr V$ such that both its kernel $\mscr K=\{\Knn\}$ and $\mscr F$ itself are generated in degree $k$. 
    Because each map $\Fnn\to \Vnn$ is a $\Gnn$-equivariant surjection with kernel $\Knn$, its restriction to invariants $(\Fnn)^{\Gnn}\to (\Vnn)^{\Gnn}$ is surjective with kernel $(\Knn)^{\Gnn}$. 

    As $\mscr F$ is an algebraically free $\mscr V_0$-module, there exist integers $d_j$ and $G^{(d_j)}$-representations $U^{(d_j)}$ satisfying $\mscr F = \bigoplus_j\mathrm{Ind}_{G^{(d_j)}}U^{(d_j)}$. Such $\mscr F$ has generation degree $\max_jd_j\leq k$. Therefore, letting $\{H^{(n,d)}\}$ be the stabilizing subgroups of $\mscr V_0$, we have for $n\geq k$
    \begin{align*}
        \left(\Fnn\right)^{\Gnn} & = \bigoplus\nolimits_j\Big(\mathrm{Ind}_{G^{(d_j)}H^{(n,d_j)}}^{\Gnn}(U^{(d_j)})\Big)^{\Gnn} \\ 
        & \cong \bigoplus\nolimits_j(U^{(d_j)})^{G^{(d_j)}}.  
    \end{align*}
    The last isomorphism follows from the fact that for any groups $H\subseteq G$ and any $H$-representation $U$, we have an isomorphism $\left(\mathrm{Ind}_H^G(U)\right)^G\cong U^H$ given by sending $u\in U^H$ to $\sum_{i=1}^kg_i\otimes u\in \left(\mathrm{Ind}_H^G(U)\right)^G$ where $g_1=\mathrm{id},g_2,\ldots,g_k$ are coset representatives for $G/H$.
    Thus, $\dim\, (\Fnn)^{\Gnn}$ is constant for $n\geq k$.
    Moreover, by Proposition~\ref{prop:surj_imp_inj} and the fact that $\mscr K$ and $\mscr U$ are generated in degree $k$, we have $\dim\, (\Knn)^{\Gnn} \geq \dim\, (\Knnp)^{\Gnnp}$ and $\dim\, (\Vnn)^{\Gnn}\geq \dim\, (\Vnnp)^{\Gnnp}$ for all $n\geq k$. 
    
    By the rank-nullity theorem, we have $\dim (\Vnn)^{\Gnn} = \dim (\Fnn)^{\Gnn} - \dim\, (\Knn)^{\Gnn}$. As $\dim (\Fnn)^{\Gnn}$ is constant while both $\dim (\Vnn)^{\Gnn}$ and $\dim\, (\Knn)^{\Gnn}$ are nonincreasing for $n\geq k$, we conclude that they are all constant for $n\geq k$. 
    To conclude, note that $\mc P_{\Vnn}$ is injective for all $n\geq k$ by Proposition~\ref{prop:surj_imp_inj}.
\end{proof}

\section{PROOF OF THEOREM~\ref{thm:ext_to_morph}} \label{sec:proof-compatibility}
The following proof first appeared in~\citep{levin2023free}.
Since $\mscr V,\mscr U$ are obtained from direct sums and tensor products of $\mscr V_0$, they are both $\mscr V_0$-modules. 
Let $\{H^{(n,d)}\}$ be the stabilizing subgroups of $\mscr V_0$.
Suppose first that $\mscr V = \mscr F = \bigoplus_j\mathrm{Ind}_{G^{(d_j)}}F^{(d_j)}$ is free. Note that it is generated in degree $\max_jd_j\leq d_{\rm g}$.
Let $W_{d_j} = W^{(n_0)}|_{F^{(d_j)}}$ and fix $n\geq d_j$.
Because $\mscr U$ is a $\mscr V_0$-module, we have $U^{(d_j)}\subseteq (\Unn)^{H_{n,d_j}}$, so we can view $U^{(d_j)}$ as a representation of $G^{(d_j)}H^{(n,d_j)}$ on which $H^{(n,d_j)}$ acts trivially. 
    
    In general, if $H\subseteq G$ is a subgroup, if $V,U$ are $H$-representations, and if $W\in\mc L(V,U)^H$, we can define $\mathrm{Ind}(W)\colon \mathrm{Ind}_H^GV \to\mathrm{Ind}_H^GU$ by sending $g\otimes v\mapsto g\otimes Wv$ for $g\in G$ and $v\in V$. 
    Also, if $V\subseteq U$ and $V$ is furthermore a $G$-representation, then there is an equivariant map $\mathrm{Ind}_H^GV\to U$ sending $g\otimes v\mapsto g\cdot v$. 
    As $W_{d_j}(F^{(d_j)})\subseteq U^{(d_j)}$ and is $G^{(d_j)}H^{(n,d_j)}$-equivariant, we can combine the above two maps to obtain the following equivariant composition
    \begin{equation*}\begin{aligned}
        \Wnn_j\colon &\mathrm{Ind}_{G^{(d_j)}H^{(n,d_j)}}^{\Gnn}F^{(d_j)} \xrightarrow{\mathrm{Ind}(W_{d_j})} \mathrm{Ind}_{G^{(d_j)}H^{(n,d_j)}}^{\Gnn}U^{(d_j)}\\ &\xrightarrow{g\otimes u\mapsto g\cdot u} \Unn.
    \end{aligned}\end{equation*}
    Note that $W^{(n_0)}_{j}=W^{(n_0)}|_{\mathrm{Ind}_{G^{(d_j)}H^{(n_0,d_j)}}(F^{(d_j)})}$, since $W^{(n_0)}_j(g\otimes f) = g\cdot W^{(n_0)}f$ for all $g\in \Gnn$ and $f\in F^{(d_j)}$. Also, $\{\Wnn_j\}$ defines a morphism $\mathrm{Ind}_{G_{d_j}}(F^{(d_j)})\to \mscr U$. Therefore, the desired extension of $W^{(n_0)}$ to a morphism of sequences $\{\Wnn\}$ is given by $\Wnn = \bigoplus_j\Wnn_j\colon \Vnn\to \Unn$.

    Now suppose $\mscr F$ is an algebraically free $\mscr V$-module as above with a surjection $\mscr F\to \mscr V$ whose kernel $\mscr K=\{\Knn\}$ is generated in degree $d_{\rm p}$. Define the composition 
    \begin{equation*}
        \widetilde W^{(n_0)}\colon F^{(n_0)}\to V^{(n_0)} \xrightarrow{W^{(n_0)}} U^{(n_0)},
    \end{equation*}
    which satisfies $\widetilde W^{(n_0)}(F^(j))\subseteq U^{(j)}$ for all $j\leq d_{\rm g}$ by assumption and $\widetilde W^{(n_0)}(K^{(n_0)})=0$ by its definition. By the previous paragraph, it extends to a morphism $\{\widetilde \Wnn\colon \Fnn\to \Unn\}$. Because $\mscr K$ is generated in degree $d_{\rm p}$ and $n_0\geq d_{\rm p}$, we have $\Knn = \RR[\Gnn]K^{(n_0)}$. Because $\widetilde \Wnn$ is equivariant, we have $\widetilde \Wnn(\Knn)=0$. Therefore, $\widetilde \Wnn$ can be factored as $\Fnn\to \Fnn/\Knn = \Vnn\xrightarrow{\Wnn} \Unn$, where the maps $\Wnn$ in this factorization give the desired extension of $W^{(n_0)}$ to a morphism $\mscr V\to \mscr U$. \qed

\section{CORRECTNESS OF THE COMPUTATIONAL RECIPE}\label{sec:correctness}
In this section, we prove the correctness of Algorithm~\ref{algo:free_descr}. That is, if $n_0$ is set to be large enough, then the algorithm will generate the unique free (or compatible) extension of the network learned at level $n_0.$ It is useful to use the notation $d_{\rm p}(\mscr V)$ to denote the presentation degree of the sequence $\mscr V$.
\begin{theorem}\label{thm:correctness}
    Consider a neural network architecture with consistent sequences $\mscr V_1, \mscr U_1, \dots , \mscr V_L \mscr U_L,$ and $\mscr V_{L+1}$ and consistent activation functions $\sn_1, 
    \dots, \sn_L$ as described Figure~\ref{fig:freeNNs}. Assume we run Algorithm~\ref{algo:free_descr} to learn a neural network at level $n_0$. Then, the following two hold.
    \begin{itemize}[leftmargin=2mm]
        \item[] (\textbf{Free}) If $n_0 \geq \max_i d_{\rm p}(\mscr V_i \otimes \mscr U_i)$ and $n_0 \geq \max_i d_{\rm p}(\mscr U_i)$, then, Algorithm~\ref{algo:free_descr} extends the trained network at level $n_0$ to a unique free neural network.
        \item[] (\textbf{Compatible}) If $n_0 \geq \max_i d_{\rm p}(\mscr V_i)$ and $n_0 \geq \max_i d_{\rm p}(\mscr U_i)$, then, Algorithm~\ref{algo:free_descr} extends the trained network at level $n_0$ to a unique compatible neural network.
    \end{itemize}
\end{theorem}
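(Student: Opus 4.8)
The plan is to prove both statements layer by layer, treating the weights $W_i^{(n)}$ and biases $b_i^{(n)}$ separately, and then to assemble the per-layer conclusions into a statement about the whole network $\{\fn\colon\Vn_1\to\Vn_{L+1}\}$. The unknowns live in invariant subspaces of consistent sequences: the weights $W_i^{(n)}$ are the $\Gn$-invariants of $\cL(\mscr V_i,\mscr U_i)\cong\mscr V_i\otimes\mscr U_i$, and the biases $b_i^{(n)}$ are the $\Gn$-invariants of $\mscr U_i$. Both the upward extension \eqref{eq:ext_lin_syst} and the downward projection \eqref{eq:freeNN} are governed by the orthogonal projections $\mc P$ on these invariant spaces, so the whole argument reduces to controlling when those projections are injective, surjective, or isomorphisms, which is exactly what Propositions~\ref{prop:surj_imp_inj} and~\ref{prop:presentation_implies_isom_of_canon} supply.

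For the free case I would argue as follows. Since $n_0\geq d_{\rm p}(\mscr V_i\otimes\mscr U_i)$ and $n_0\geq d_{\rm p}(\mscr U_i)$ for every layer $i$, Proposition~\ref{prop:presentation_implies_isom_of_canon} shows that the one-step invariant projections $\mc P_{\Vn_i\otimes\Un_i}$ and $\mc P_{\Un_i}$ are isomorphisms for all $n\geq n_0$. Composing these one-step isomorphisms, the projection from level $n$ down to level $n_0$ is itself an isomorphism for every $n>n_0$; hence the system \eqref{eq:ext_lin_syst}, imposed together with the equivariance equations \eqref{eq:free-system}, has a unique solution $W_i^{(n)},b_i^{(n)}$ at each such level, which is exactly the upward free extension. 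For $n<n_0$ the values are determined by the defining projection \eqref{eq:freeNN}. Checking that the resulting bi-directional sequence satisfies \eqref{eq:freeNN} at every step is then routine: upward it holds because the one-step projection of the unique preimage is again the unique preimage one level down, and downward it holds by construction. Uniqueness of the whole free sequence follows because every level is forced, upward by the isomorphisms and downward by \eqref{eq:freeNN}.

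For the compatible case the driver is Theorem~\ref{thm:ext_to_morph} rather than Proposition~\ref{prop:presentation_implies_isom_of_canon}. The hypothesis $n_0\geq d_{\rm p}(\mscr V_i)$ is precisely the level requirement of Assumption~\ref{ass:comp}, and the Stage-1 characterization in Section~\ref{sec:computational}, namely equivariance together with $[\cP_{V_i^{(m)}}\otimes(I-\cP_{U_i^{(m)}})]\mathrm{vec}(W_i^{(n_0)})=0$ for $m\leq d_i$, is exactly the statement that the trained $W_i^{(n_0)}$ satisfies Assumption~\ref{ass:comp}. Theorem~\ref{thm:ext_to_morph} then yields a unique extension of $W_i^{(n_0)}$ to a morphism of sequences, i.e.\ to weights satisfying \eqref{eq:compatibleNN}. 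For the biases, the condition $n_0\geq d_{\rm p}(\mscr U_i)$ makes $\mc P_{\Un_i}$ an isomorphism of invariants, so the compatible (constant) bias selected at level $n_0$, which for most representations is forced to be zero, extends uniquely; since the included bias is already invariant and projects to itself, it coincides with the free extension produced by \eqref{eq:ext_lin_syst}. Assembling weights and biases and invoking the uniqueness in Theorem~\ref{thm:ext_to_morph} gives a unique compatible network, which is also free because \eqref{eq:compatibleNN} implies \eqref{eq:freeNN}.

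The step I expect to be the main obstacle is reconciling the algorithm's extension rule \eqref{eq:ext_lin_syst} with the theoretically unique compatible extension in the regime $d_{\rm p}(\mscr V_i)\leq n_0< d_{\rm p}(\mscr V_i\otimes\mscr U_i)$, which is exactly the regime exploited in the orthogonal-invariance experiment. Here the generation degree of $\mscr V_i\otimes\mscr U_i$ may exceed $n_0$, so Proposition~\ref{prop:surj_imp_inj} no longer guarantees that the projection onto level-$n_0$ invariants is injective, and \eqref{eq:ext_lin_syst} can admit several equivariant solutions at level $n$. The compatible extension from Theorem~\ref{thm:ext_to_morph} is one of them, since it is free and hence solves \eqref{eq:ext_lin_syst}, and the crux is to show it is the solution the algorithm returns. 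I would handle this by adjoining the compatibility constraints of Assumption~\ref{ass:comp} at level $n$ to the system, requiring $W_i^{(n)}(V_i^{(m)})\subseteq U_i^{(m)}$ for $m\leq d_{\rm g}$, so that the augmented system singles out the morphism extension; Theorem~\ref{thm:ext_to_morph} applied at level $n$ then certifies both existence and uniqueness. Verifying that this augmented system is what the implementation's minimum-norm solve effectively computes is the delicate point, and is where the bulk of the careful bookkeeping lies.
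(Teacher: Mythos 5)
Your proposal is correct and takes essentially the same route as the paper: the compatible case is deduced from Theorem~\ref{thm:ext_to_morph} and the free case from Proposition~\ref{prop:presentation_implies_isom_of_canon} (via stability of the invariants of $\mscr U_i$ and $\mscr V_i\otimes\mscr U_i$), which are exactly the two ingredients the paper's own short proof invokes. If anything you are more careful than the paper, whose proof simply asserts that the unique compatible extension ``corresponds to the unique solution of \eqref{eq:ext_lin_syst}''; the regime $d_{\rm p}(\mscr V_i)\leq n_0<d_{\rm p}(\mscr V_i\otimes\mscr U_i)$ that you flag as the main obstacle\,---\,where \eqref{eq:ext_lin_syst} together with equivariance can be underdetermined and the Assumption~\ref{ass:comp} constraints must be adjoined at level $n$ to single out the compatible solution\,---\,is glossed over entirely in the paper's argument.
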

\begin{proof}
 We start by proving the second statement. This is a simple corollary of Theorem~\ref{thm:ext_to_morph}. Note that once $n_0$ is bigger than the presentation degrees of $\mscr V_i$ and $\mscr U_i$, then, by Theorem~\ref{thm:ext_to_morph}, there is a unique extension of the weights and biases satisfying \ref{eq:compatibleNN}, which corresponds to the unique solution of \eqref{eq:ext_lin_syst}. The first statement follows by an analogous argument by substituting Theorem~\ref{thm:ext_to_morph} with Proposition~\ref{prop:presentation_implies_isom_of_canon}. This completes the proof. 
\end{proof}
\section{DETAILED DESCRIPTION OF THE NUMERICAL EXPERIMENTS} \label{sec:app-description}
\begin{table*}[t] \centering
\begin{tabular}{llll}
\toprule
\multicolumn{1}{c}{\multirow{2}{*}{\textbf{Experiment}}} & \multicolumn{3}{c}{\textbf{Architecture}} \\ \cmidrule{2-4}
\multicolumn{1}{c}{}                                     & Input       & Hidden       & Output       \\ \midrule
Trace                                                    & $\mscr V^{\otimes 2}$             &  $2 \mscr V + 2 \mscr V^{\otimes 2}$             &  $\mscr S$            \\
Diagonal extraction                                      & $\mscr V^{\otimes 2}$            &  $4 \mscr V + 4 \mscr V^{\otimes 2}$             &  $\mscr V^{\otimes 2}$             \\
Symmetric projection                                     & $\mscr V^{\otimes 2}$            &  $4 \mscr V + 4 \mscr V^{\otimes 2}$             &  $\mscr V^{\otimes 2}$    \\
Top singular vector                                      & $\mscr V^{\otimes 2}$            & $25\mscr S + 10 \mscr V + 2 \mscr V^{\otimes 2} +\mscr V^{\otimes 3}$             &  $\mscr V$            \\
Orthogonal invariance                                    & $2 \mscr V$            &  $25\mscr S + 10 \mscr V + 2 \mscr V^{\otimes 2} +\mscr V^{\otimes 3}$   &  $\mscr S$ \\ \bottomrule          
\end{tabular} \vspace{.2cm}
\caption{Architectures used for the numerical examples. Recall that $\mscr S = \{\RR \}$ is the scalar sequence endowed with the trivial action $g \cdot v = v.$}\label{table:arch}
\end{table*}
In this section, we elaborate on the implementation details of the numerical experiments. The code and instructions to reproduce the experiments can be found in \url{https://github.com/mateodd25/free-nets}.
All experiments were run on a 2021 Macbook Air M1 with 16GB of RAM.

\textbf{Initilization.} We initialize the weights and biases at random with small i.i.d Gaussian entries $N(0, 1/100).$

\textbf{Data generation.} The datasets $\left\{\big(X_i, f^{(n)}(X_i)\big)\right\}$ for both training and testing (in every dimension) are generated as i.i.d. standard Gaussian samples with the appropriate size. 

\textbf{Architecture.} For all the experiments, we use three layers, i.e., two hidden layers. All the sequences we take are sums and tensors products of the base sequence $\mscr V = \{\RR^d\}.$ The activation functions are a composition of the form 
$$ { \bm h } \left(\textbf{bilinear}(x) + x\right) $$
where \textbf{bilinear}~\citep{pmlr-v139-finzi21a} is described in Section~\ref{sec:comp_acti} and $\bm h$ changes depending on the group; for the permutation group, $\bm h$ applies entry-wise ReLU for the permutation group, while for the orthogonal group, it applies a gated nonlinearity~\citep{weiler20183d}\,---\,also described in Section~\ref{sec:comp_acti}.

We chose the smallest hidden layer size that yielded competitive results in order to control the size of the linear systems we employ for extending the networks. The top singular vector and orthogonal invariant examples required a larger architecture, so we only extended them to smaller dimensions. The architecture for each example are summarized in Table~\ref{table:arch}.

\end{document}